\newtheorem{lemma}{Lemma}[]
\newtheorem{prop}{Property}[]
\newtheorem{assumpA}{Assumption}
\DeclareMathOperator*{\argmax}{arg\,max}
\let\oldnl\nl% Store \nl in \oldnl
\newcommand{\nonl}{\renewcommand{\nl}{\let\nl\oldnl}}% Remove line number for one line
\newcommand{\citet}[1]{\citeauthor{#1} \shortcite{#1}}
\newcommand{\citep}{\cite}
\title{Reinforcement Learning When All Actions are Not Always Available}
\author{
Yash Chandak\textsuperscript{\rm 1} \,\,\, Georgios Theocharous\textsuperscript{\rm 2} \,\,\, Blossom Metevier\textsuperscript{\rm 1} \,\,\, Philip S. Thomas\textsuperscript{\rm 1} \\
\textsuperscript{\rm 1}University of Massachusetts Amherst, \textsuperscript{\rm 2}Adobe Research \\
\{ychandak,bmetevier,pthomas\}@cs.umass.edu \,\,\,\, theochar@adobe.com
% Written by AAAI Press Staff\textsuperscript{\rm 1}\thanks{Primarily Mike Hamilton of the Live Oak Press, LLC, with help from the AAAI Publications Committee}\\ \Large \textbf{AAAI Style Contributions by
% Pater Patel Schneider,} \\ \Large \textbf{Sunil Issar, J. Scott Penberthy, George Ferguson, Hans Guesgen}\\ % All authors must be in the same font size and format. Use \Large and \textbf to achieve this result when breaking a line
% \textsuperscript{\rm 1}Association for the Advancement of Artificial Intelligence\\ %If you have multiple authors and multiple affiliations
% use superscripts in text and roman font to identify them. For example, Sunil Issar,\textsuperscript{\rm 2} J. Scott Penberthy\textsuperscript{\rm 3} George Ferguson,\textsuperscript{\rm 4} Hans Guesgen\textsuperscript{\rm 5}. Note that the comma should be placed BEFORE the superscript for optimum readability
% 2275 East Bayshore Road, Suite 160\\
% Palo Alto, California 94303\\
% publications20@aaai.org % email address must be in roman text type, not monospace or sans serif
}
\begin{document}

\maketitle

\begin{abstract}
The Markov decision process (MDP) formulation used to model many real-world sequential decision making problems does not efficiently capture the setting where the set of available decisions (actions) at each time step is stochastic. 
Recently, the stochastic action set Markov decision process (SAS-MDP) formulation has been proposed, which better captures the concept of a stochastic action set. 
In this paper we argue that existing RL algorithms for SAS-MDPs can suffer from potential divergence issues, and present new policy gradient algorithms for SAS-MDPs that incorporate variance reduction techniques unique to this setting, and provide conditions for their convergence. 
We conclude with experiments that demonstrate the practicality of our approaches on tasks inspired by real-life use cases wherein the action set is stochastic.
\end{abstract}

\section{Introduction}

In many real-world sequential decision making problems, the set of available decisions, which we call the \emph{action set}, is stochastic. 
In vehicular routing on a road network \citep{gendreau1996stochastic} or packet routing on the internet \citep{ribeiro2008optimal}, the goal is to find the shortest path between a source and destination. 
However, due to construction, traffic, or other damage to the network, not all pathways are always available. 
In online advertising \citep{tan2012online,mahdian2007allocating}, the set of available ads can vary due to fluctuations in advertising budgets and promotions. 
In robotics \citep{feng2000optimal}, actuators can fail.
In recommender systems \citep{harper2007user}, the set of possible recommendations can vary based on product availability.
These examples capture the broad idea and motivate the question we aim to address: \textit{how can we develop efficient learning algorithms for sequential decision making problems wherein the action set can be stochastic?}

Sequential decision making problems \emph{without} stochastic action sets are typically modeled as \emph{Markov decision processes} (MDPs). 
Although the MDP formulation is remarkably flexible, and can incorporate concepts like stochastic state transitions, partial observability, and even different (deterministic) action availability depending on the state, it cannot efficiently incorporate stochastic action sets. 
As a result, algorithms designed for MDPs are not well suited to our setting of interest. 
Recently, \citet{Boutilier2018PlanningAL} laid the foundations for \emph{stochastic action set Markov decision processes} (SAS-MDPs), that extends MDPs to include stochastic action sets. 
They also showed how the Q-learning and value iteration algorithms, two classic algorithms for approximating optimal solutions to MDPs, can be extended to SAS-MDPs.

In this paper we show that the lack of convergence guarantees of the Q-learning algorithm, when using function approximators in the MDP setting can potentially get exacerbated in the SAS-MDP setting. 
We therefore derive policy gradient and natural policy gradient algorithms for the SAS-MDP setting and provide conditions for their almost-sure convergence. 
Critically, since the introduction of stochastic action sets introduces further uncertainty in the decision making process, variance reduction techniques are of increased importance. 
We therefore derive new approaches to variance reduction for policy gradient algorithms that are unique to the SAS-MDP setting. 
We validate our new algorithms empirically on tasks inspired by real-world problems with stochastic action sets.

\section{Related Work}

While there is extensive literature on solving sequential decision problems modeled as MDPs \citep{sutton2018reinforcement}, there are few methods designed to handle stochastic action sets.  
Recently, \citet{Boutilier2018PlanningAL} laid the foundation for studying MDPs with stochastic action sets by defining the new SAS-MDP problem formulation, which we review in the background section. %extended the MDP formulation setup and laid the foundations for SAS-MDPs, which capture this aspect of stochasticity in the action set (view review SAS-MDPs in detail in the subsequent subsection). 
After defining SAS-MDPs, \citet{Boutilier2018PlanningAL} presented and analyzed the model-based value iteration and policy iteration algorithms and the model-free Q-learning algorithm for SAS-MDPs. 

In the bandit setting, wherein individual decisions are optimized rather than sequences of dependent decisions, \textit{sleeping bandits} extend the standard bandit problem formulation to allow for stochastic action sets \citep{kanade2009sleeping,kleinberg2010regret}. 
We focus on the SAS-MDP formulation rather than the sleeping bandit formulation because we are interested in sequential problems. %, wherein the decisions made at one time step can influence the states at future time steps. 
Such sequential problems are more challenging because making optimal decisions requires one to reason about the long-term impact of decisions, which includes reasoning about how a decision will influence the probability that different actions (decisions) will be available in the future.

Although we focus on the \emph{model-free} setting, wherein the dynamics of the environment are not known \emph{a priori} to the agent optimizing its decisions, in the alternative \emph{model-based} setting researchers have considered related problems in the area of \emph{stochastic routing} \citep{papadimitriou1991shortest,polychronopoulos1996stochastic,nikolova2006optimal,nikolova2008route}. 
In stochastic routing problems, the goal is to find a shortest path on a graph with stochastic availability of edges. 
The SAS-MDP framework generalizes stochastic routing problems by allowing for sequential decision making problems that are not limited to shortest path problems. 

\section{Background}
\label{sec:background}
MDPs and SAS-MDPs \citep{Boutilier2018PlanningAL} are mathematical formulations of sequential decision problems. 
Before defining SAS-MDPs, we define MDPs. 
We refer to the entity interacting with an MDP or SAS-MDP and trying to optimize its decisions as the \emph{agent}.

Formally, an MDP is a tuple $\mathcal{M} = (\mathcal{S},\mathcal{B},\mathcal{P},\mathcal{R}, \gamma, d_0)$.
$\mathcal{S}$ is the set of all possible states that the agent can be in, called the \emph{state set}. 
Although our math notation assumes that $\mathcal S$ is countable, our primary results extend to MDPs with continuous states. 
$\mathcal{B}$ is a finite set of all possible actions that the agent can take, called the \emph{base action set}. 
$S_t$ and $A_t$ are random variables that denote the state of the environment and action chosen by the agent at time $t \in \{0,1,\dotsc\}$. 
$\mathcal P$ is called the \emph{transition function} and characterizes how states transition: $\mathcal P(s,a,s')\coloneqq \Pr(S_{t+1}=s'|S_t=s,A_t=a)$. 
$R_t \in [-R_\text{max}, R_\text{max}]$, a bounded random variable, is the scalar reward received by the agent at time $t$, where $R_\text{max}$ is a finite constant. 
$\mathcal R$ is called the \emph{reward function}, and is defined as $\mathcal R(s,a)\coloneqq\mathbb{E}[R_t|S_t=s,A_t=a]$. 
%and $\mathcal R$ are the transition and reward functions, respectively. 
%
The reward discount parameter, $\gamma \in [0,1)$, characterizes how to utility of rewards to the agent decays based on how far in the future they occur. 
We call $d_0$ the \emph{start state distribution}, which is defined as $d_0(s)\coloneqq\Pr(S_0=s)$.

We now turn to defining a SAS-MDP. 
Let the set of actions available at time $t$ be a random variable, $\mathcal A_t \subseteq \mathcal B$, which we assume is always not empty, i.e., $\mathcal A_t\neq \emptyset$.
Let $\varphi$ characterize the conditional distribution of $\mathcal A_t$: 
%The underlying structure of stochasticity in the action availability is represented by 
$\varphi(s, \alpha) \coloneqq \Pr(\mathcal A_t = \alpha | S_t = s)$.
We assume that $\mathcal A_t$ is Markovian, in that its distribution is conditionally independent of all events prior to the agent entering state $S_t$ given $S_t$. 
Formally, a SAS-MDP is $\mathcal M' = \{\mathcal M \cup \varphi \}$, %, which extends the base MDP $\mathcal M$. 
with the additional requirement that $A_t \in \mathcal A_t$. 

A policy $\pi:\mathcal S \times 2^{\mathcal  B} \times \mathcal B \to [0,1]$ is a conditional distribution over actions for each state: $\pi(s, \alpha, a)\coloneqq \Pr(A_t=a|S_t=s, \mathcal A_t = \alpha)$ for all $s \in \mathcal S, a \in \alpha, \alpha \subseteq \mathcal B$, and $t$, where $\alpha \neq \emptyset$.
Sometimes a policy is parameterized by a weight vector $\theta$, such that changing $\theta$ changes the policy. 
We write $\pi^\theta$ to denote such a parameterized policy with weight vector $\theta$. 
%
% For visual consistency, we use $A_t \coloneqq B_t$ from hereon, to represent the action taken at time $t$.
%
For any policy $\pi$, we define the corresponding \emph{state-action value function} to be $q^\pi(s,a) \coloneqq \mathbb{E}[\sum_{k=0}^{\infty}\gamma^k R_{t+k} |S_t=s, A_t=a, \pi]$, where conditioning on $\pi$ denotes that $A_{t+k} \sim \pi(S_{t+k}, \mathcal A_{t+k}, \cdot)$ for all $\mathcal A_{t+k}$ and $S_{t+k}$ for $k \in [t+1, \infty)$. 
Similarly, the \emph{state-value function} associated with policy $\pi$ is $v^\pi(s)\coloneqq \mathbb{E}[\sum_{k=0}^{\infty}\gamma^k R_{t+k} |S_t=s, \pi]$.
For a given SAS-MDP $\mathcal{M}'$, the agent's goal is to find an \emph{optimal policy}, $\pi^*$, (or equivalently optimal policy parameters $\theta^*$) which is any policy that maximizes the expected sum of discounted future rewards.
More formally, an optimal policy is any $\pi^{*} \in \text{argmax}_{\pi \in \Pi} J(\pi)$, where $J(\pi)\coloneqq \mathbb{E} [\sum_{t=0}^{\infty}\gamma^tR_t |\pi]$ and $\Pi$ denotes the set of all possible policies.
For notational convenience, we sometimes use $\theta$ in place of $\pi$, e.g., to write $v^\theta$, $q^\theta$, or $J(\theta)$, since a weight vector $\theta$ induces a specific policy.

%\subsection{Model-Free Learning in SAS-MDPs}
%
As shown by \citet{Boutilier2018PlanningAL}, one way to model stochastic action sets using the \textit{MDP formulation (rather than the SAS-MDP formulation)} is to % the problem with stochasticty in action availability is to
define states such that one can infer $\mathcal A_t$ from $S_t$. %\textit{embed} $\mathcal A_t$ %the feasible action set 
%in the state space itself.
%
Transforming an MDP into a new MDP with $\mathcal A_t$ embedded in $S_t$ in this way %Building a surrogate MDP using this procedure 
can result in the size of the state set growing exponentially---
by a factor of $2^{|\mathcal{B}|}$.
This drastic increase in the size of the state set can make finding or approximating an optimal policy prohibitively difficult. 
%Such a blow up is problematic from a learning perspective.
%
Using the SAS-MDP formulation, the challenges associated with this exponential increase in the size of the state set can be avoided, and one can derive algorithms for finding or approximating optimal policies in terms of the state set of the original underlying MDP. 
This is accomplished using a variant of the 
\textit{Bellman operator}, $\mathcal T$, which incorporates the concept of stochastic action sets:% computes the expectations using $\alpha \sim \varphi(s, \cdot)$, corresponding to all possible subsets of the base actions.
\begin{align}
    \mathcal{T}^{\pi}v(s) =& \sum_{\alpha \in 2^\mathcal B} \varphi(s,\alpha)\sum_{a \in \alpha}\pi(s,\alpha,a)\Big(  \sum_{s' \in \mathcal S} P(s,a,s')
    \\
    &~~~~~~~~~~~~~~~~~~~~~~~~~~~~~~~~~~~(R(s,a) + \gamma v(s')) \Big)
    % \\
    % =&
    % \mathbb{E} \left [   R(s, a) + \gamma \sum_{s'} \mathcal{P}(s,a,s') v(s') \middle | \pi,  \varphi  \right], 
    \label{eqn:SAS-bellman}
\end{align}
for all $s \in \mathcal S$. 
Similarly, one can extend the \emph{Bellman optimality operator} \citep{sutton2018reinforcement}:
\begin{align}
    \mathcal{T}^*\!v(s) =&\!\!\!\sum_{\alpha \in 2^\mathcal B}\!\! \varphi(s,\alpha)\max_{a \in \alpha}\!\! \sum_{s' \in \mathcal S} \!\!P(s,a,s')(R(s,a) + \gamma v(s')).
    % \\
    % =& \mathbb{E} \left [ \underset{a \in \alpha}{\text{max}} \left( R(s, a) + \gamma \sum_{s'} \mathcal{P}(s,a,s') v(s') \right) \middle | \varphi \right].
    %
\end{align}
\citet{Boutilier2018PlanningAL} showed that the stationary optimal policies exists for SAS-MDPs and can be represented using (state-specific) decision lists (or orderings/rankings) over the action set.
As a policy takes into account the available set of actions, an optimal policy chooses the highest ranked action from those that are available.
%
% Showing an equivalence between the fixed point of this modified Bellman operator and the fixed point of the standard Bellman operator on the MDP with embedded actions,
Building upon these results, \citet{Boutilier2018PlanningAL} proposed the following update for a tabular estimate, $q$, of $q^{\pi^*}$:
\begin{align}
    q(S_t, A_t) \leftarrow (1 - \eta)q (S_t, A_t) + \eta (R_t + \gamma \!\!\!\underset{a \in \mathcal A_{t+1}}{\text{max}}\!\!\!q(S_{t+1}, a)). \label{eqn:SAS-q}
\end{align}
Notice that the maximum is computed only over the available actions, $\mathcal A_{t+1}$, in state $S_{t+1}$.
%
%This modification allows to elegantly leverage the existing Q-learning method for the SAS-MDP setup.  
%
We refer to the algorithm using this update rule as \emph{SAS-Q-learning}.
\section{Potential Limitations of SAS-Q-Learning}
\label{sec:limitations}
%
% \todo{ authors is to investigate whether they can prove that SAS Q-learning is unstable in all cases where vanilla Q-learning is unstable, and unstable in some additional cases as well. Such a proof would be much stronger than a single example where SAS Q-learning is unstable.}

% \todo{Make both actions have some probability}
%
Although SAS-Q-learning provides a powerful first model-free algorithm for approximating optimal policies for SAS-MDPs, it inherits several of the drawbacks of the Q-learning algorithm for MDPs.  
%
%We outline some of the primary examples here.
%
Just like Q-learning, in a state $S_t$ and with available actions $\mathcal A_t$, the SAS-Q-learning method chooses actions deterministically when not exploring: $A_t\in \argmax_{a \in \mathcal A_t}q(S_t,a)$. 
This limits its practicality for problems where optimal policies are stochastic, which is often the case when the environment is partially observable or when the use of function approximation causes state aliasing \citep{baird1995residual}. 
Additionally, if the SAS-Q-learning update converges to an estimate, $q$, of $q^{\pi^*}$ such that $\mathcal T v(s)=v(s)$ for all $s \in \mathcal S$, % You didn't define this term: SAS Bellman error of zero,
then the agent will act optimally; 
however, convergence to a fixed-point of $\mathcal T$ is seldom achieved in practice, and reducing the difference between $v(s)$ and $\mathcal T v(s)$ (what SAS-Q-learning aims to do) %SAS Bellman error 
does not ensure improvement of the policy \citep{sutton2018reinforcement}.

SAS-Q-learning does not perform gradient ascent or descent on \emph{any} function, and it can cause divergence of the estimator $q$ when using function approximation, just like Q-learning for MDPs \citep{baird1995residual}.
In the setting where all actions are always available, SAS-Q-learning reduces to standard Q-learning. Therefore, for all the cases in this setting where Q-learning is unstable, SAS-Q-learning is also unstable.
In the setting where all actions are \textit{not} always available, there exist additional cases where Q-learning is stable but SAS-Q-learning is not.
However, in such cases where Q-learning is stable, its solution might not be particularly useful as it does not incorporate the notion of stochasticity in the action set (Section 8, Fig.2, \citeauthor{Boutilier2018PlanningAL} 2018). 
%
% Furthermore, we contend that the divergent behavior of SAS-Q-Learning can in some cases be more severe than that of the Q-Learning algorithm for MDPs. 
%In fact, it can at times make the divergence problem of Q-learning worse.
%
% That is, in cases where Q-learning converges, SAS-Q-learning can diverge.

\begin{figure}
  \begin{center}
    \includegraphics[width=0.18\textwidth]{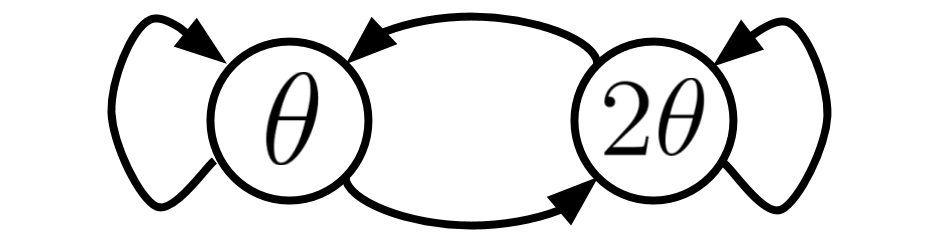}
  \end{center}
  \caption{$\theta \rightarrow 2\theta$ MDP}		\label{Fig:SAS_q_divergence}
%%\vspace{-15pt}
\end{figure}
To see this, consider the SAS variant of the classical $\theta \rightarrow 2\theta$ MDP \citep{tsitsiklis1983analysis} illustrated in Figure \ref{Fig:SAS_q_divergence}. 
In this example there are two states, $s_1$ (left in Figure \ref{Fig:SAS_q_divergence}) and $s_2$ (right), and two actions, $a_1=\text{left}$ and $a_2=\text{right}$. 
The agent in this example uses function approximation \citep{sutton2018reinforcement}, with weight vector $\theta \in \mathbb R^2$, such that $q(s_1,a_1)=\theta_1, q(s_2,a_1)=2\theta_1$ and $q(s_1,a_2)=\theta_2, q(s_2,a_2)=2\theta_2$. %where the feature for the second state is a scalar and twice of that in the first state.
In either state, if the agent takes the left action, it goes to the left state, and if the agent takes the right action, it goes to the right state. %A left action in any state takes the system to the left state and a right action in any state takes it to the right state; however, both actions need not be always available.
In our SAS-MDP version of this problem, both actions are not always available. 
Let $R_t=0$ always, and $\gamma = 1$. %The rewards are $0$ everywhere and $\gamma = 1$.
Consider the case where the weights of the $q$-approximation are initialized to
%We parameterize the $q$ function using weights, $\theta = \lbrack \theta_1, \theta_2 \rbrack$, initialized to 
$\theta=\lbrack -2, -5 \rbrack$. %, for $q$ values corresponding to left and right actions.
Now suppose that a transition is observed from the left state to the right state, and after the transition the left action is \textit{not} available to the agent. 
As per the SAS-Q-learning update rule provided in \eqref{eqn:SAS-q},
$\theta_2 \leftarrow \theta_2 + \eta (r + \gamma 2\theta_2 - \theta_2).$
Since $r=0$ and $\gamma =1$, this is equivalent to 
$\theta_2 \leftarrow \theta_2 + \eta \theta_2.$
Considering the off-policy setting where this transition is used repeatedly on its own, then irrespective of the learning rate, %schedule,
$\eta > 0$, the weight $\theta$ would diverge to $-\infty$.
In contrast, had there been no constraint of using max over $q$ given the available actions, the Q-learning update would have been,
$\theta_2 \leftarrow \theta_2 + \eta (r + \gamma 2 \theta_1 - \theta_2)$ because action $a_1$ has higher q-value than $a_2$ due to $\theta_1 > \theta_2$.
This would make $\theta_2$ converge to the value $-4$ (the correct answer is $0$).
%
% This example can be generalized for any $\theta = [-a, -b]$, where $b > a > 0$ and step size is sufficiently small (for the above example, $a=2$ and $b=5$). The fixed point for $\theta_2$ will be at $-2a$ 
% (Had $\theta_1$ been initialized to $0$, $\theta$ would have converged to the correct value of $[0, 0]$.)
%

This provides an example of how the stochastic constraints on the set of available actions can be instrumental in causing the SAS-Q-learning method to diverge, and ignoring the stochastic constraint can prevent Q-learning from converging to the correct solution. 
We suspect more such cases can be constructed by adapting examples from non-SAS setup ( \citeauthor{baird1995residual} 1995,  \citeauthor{Gordon96chatteringin} 1996, Chpt 11.2 \citeauthor{sutton2018reinforcement} 2018). 
%Therefore, in this benign example where Q-learning would have converged, the constraint on action stochasticty is instrumental in making the SAS-Q-learning method diverge.

%
\section{Policy Gradient Methods for SAS-MDPs}
\label{sec:PG}
In this section we provide an alternative to the SAS-Q-learning algorithm by deriving policy gradient algorithms \citep{sutton2000policy} for the SAS-MDP setting. 
While the Q-learning algorithm minimizes the error between $\mathcal Tv(s)$ and $v(s)$ for all states $s$ (using a procedure that is not a gradient algorithm), policy gradient algorithms perform stochastic gradient ascent on the objective function $J$. 
That is, they use the update $\theta \gets \theta + \eta \Delta$, where $\Delta$ is an unbiased estimator of $\nabla J(\theta)$.

Unlike the Q-learning algorithm, policy gradient algorithms for MDPs provide convergence guarantees to a critical point (local/global optima) even when using function approximation, and can approximate optimal stochastic policies. 
However, ignoring the fact that actions are not always available and using off-the-shelf algorithms for MDPs fails to fully capture the problem setting 
%On one hand, ignoring the information regarding action availability and directly using off-the-shelf methods fail to fully capture the problem setting
\citep{Boutilier2018PlanningAL}.
It is therefore important that we derive policy gradient algorithms that are appropriate for the SAS-MDP setting, as they provide the first convergent model-free algorithms for SAS-MDPs when using function approximation. 
In the following lemma we extend the expression for the policy gradient for MDPs \citep{sutton2000policy,thomas2014bias} to handle stochastic action sets. % take into consideration this core aspect of action (un)availability.
\begin{lemma}[SAS Policy Gradient]
\label{lemma:SASPG}
For a SAS-MDP, for all $s \in \mathcal S$,
    \begin{align}
       \nabla J(\theta) =& \sum_{t=0}^{\infty} \sum_{s \in \mathcal S}\gamma^t \Pr(S_{t}=s|\theta)\Big ( \sum_{\alpha \in 2^{\mathcal{B}}} \varphi(s, \alpha)
       \\ 
       & ~~~~~~~~~~~ \sum_{a \in \alpha} q^\theta(s,a)\frac{\partial \pi^\theta(s,\alpha, a)}{\partial \theta}\Big). \label{eqn:SAS_pg}
    \end{align}
\end{lemma}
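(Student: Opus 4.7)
The plan is to follow the classical derivation of the policy gradient theorem and adapt each step to account for the sum over available action sets. First I would write the state-value function as
\[
v^\theta(s) = \sum_{\alpha \subseteq \mathcal B} \varphi(s,\alpha) \sum_{a \in \alpha} \pi^\theta(s,\alpha,a)\, q^\theta(s,a),
\]
noting that $\varphi(s,\alpha)$ does not depend on $\theta$. Differentiating in $\theta$ and applying the product rule yields two groups of terms: one containing $\partial \pi^\theta(s,\alpha,a)/\partial\theta$, which will become the ``exposed'' term appearing in the lemma, and one containing $\partial q^\theta(s,a)/\partial \theta$, which drives the recursion.

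Next I would expand $q^\theta(s,a) = \sum_{s'} \mathcal P(s,a,s')\bigl(\mathcal R(s,a) + \gamma v^\theta(s')\bigr)$. Since the transition kernel, reward function, and the action $a$ itself are $\theta$-independent given $(s,a)$, differentiation gives $\partial q^\theta(s,a)/\partial\theta = \gamma \sum_{s'} \mathcal P(s,a,s')\, \partial v^\theta(s')/\partial \theta$. Substituting back produces a fixed-point recursion of the form $\nabla v^\theta(s) = f_\theta(s) + \gamma \sum_{s'} \widetilde{\mathcal P}_\theta(s,s')\, \nabla v^\theta(s')$, where $f_\theta(s) \coloneqq \sum_\alpha \varphi(s,\alpha)\sum_{a\in\alpha} q^\theta(s,a)\, \partial\pi^\theta(s,\alpha,a)/\partial\theta$ collects the exposed terms, and $\widetilde{\mathcal P}_\theta(s,s') \coloneqq \sum_\alpha \varphi(s,\alpha)\sum_{a\in\alpha}\pi^\theta(s,\alpha,a)\, \mathcal P(s,a,s')$ is the effective state-to-state transition kernel induced by the policy under stochastic action sets.

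Then I would unroll the recursion to obtain
\[
\nabla v^\theta(s) = \sum_{k=0}^\infty \gamma^k \sum_{s'} \Pr(S_{k}=s'\mid S_0=s,\theta)\, f_\theta(s'),
\]
using the probabilistic interpretation of $\widetilde{\mathcal P}_\theta^{\,k}$. Finally, starting from $J(\theta) = \sum_{s_0} d_0(s_0)\, v^\theta(s_0)$, I would interchange the gradient with the outer sum over $s_0$ and with the time-sum, and use $\Pr(S_t = s\mid\theta) = \sum_{s_0} d_0(s_0)\widetilde{\mathcal P}_\theta^{\,t}(s_0,s)$ to collapse the initial-state average into the marginal distribution over $S_t$, yielding exactly the expression in the lemma.

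The main obstacle is not conceptual but careful justification of the interchanges: differentiating the infinite series termwise, swapping gradient and expectation, and swapping the time-sum with the state-sum. These are legitimate because $\gamma<1$, the rewards are uniformly bounded ($|R_t|\leq R_{\max}$), $\mathcal B$ is finite (so the sum over $\alpha\in 2^{\mathcal B}$ has at most $2^{|\mathcal B|}$ terms), and $\pi^\theta$ is assumed differentiable in $\theta$. Beyond these regularity checks, the derivation mirrors the standard policy gradient proof, with $\varphi(s,\alpha)$ threaded through each step as a $\theta$-independent weight in exactly the same slot as the environment transitions in the classical setting.
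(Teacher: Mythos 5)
Your proposal is correct and follows essentially the same route as the paper's proof: expand $v^\theta(s)$ over action sets and actions, apply the product rule, use the Bellman equation to turn $\partial q^\theta/\partial\theta$ into a recursion on $\partial v^\theta/\partial\theta$, unroll it, and average over $d_0$ to collect the terms into the discounted state-visitation form. Your explicit packaging of the recursion via the induced kernel $\widetilde{\mathcal P}_\theta$ and your attention to the interchange-of-limits justifications are slightly tidier than the paper's term-by-term unrolling, but they do not constitute a different argument.
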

\begin{proof}
 See Appendix A.%\ref{apx:SASPG}.
\end{proof}
It follows from %the generalized result from 
Lemma \ref{lemma:SASPG} that we can create unbiased estimates of $\nabla J(\theta)$, which can be used to update $\theta$ using the well-known stochastic gradient ascent algorithm. 
This algorithm is presented in Algorithm \ref{alg1}. 
Notably, this process does \emph{not} require the agent to know $\varphi$.
Also, similar to the SAS-Q-learning method, the policy can be parameterized such that it is not required to embed the available actions as a part of the state.
One such parameterization is provided in Appendix F. %\ref{apx:imp}.
Notice that in the special case where all actions are always available, the expression in Lemma \ref{lemma:SASPG} degenerates to the policy gradient theorem for MDPs \citep{sutton2018reinforcement}.
We now establish that SAS policy gradient algorithms are guaranteed to converge to locally optimal policies under the following standard assumptions, 
%on policy being differentiable \eqref{apx:ass:1}, gradient of $J$ being Lipschitz \eqref{apx:ass:2}, and step-sizes being decayed appropriately \eqref{apx:ass:3}.
%
% Formal assumption statements are deferred to Appendix \ref{apx:convergence}.
% 
% \todo{State assumptions formally and talk about quality of critical point}
\begin{assumpA}[Differentiable]
For any state, action-set, and action triplet $(s, \alpha, a)$, policy $\pi^\theta(s, \alpha, a)$ is continuously differentiable in the parameter $\theta$.
\label{apx:ass:1}
\end{assumpA}
\begin{assumpA}[Lipschitz smooth gradient]
Let $\Theta$ denote the set of all possible parameters for policy $\pi^\theta$, then for some constant $L$,
$$\lVert \nabla J(\theta) - \nabla J(\bar \theta)\rVert \leq L \lVert \theta - \bar \theta \rVert  \hspace{10pt} \forall \theta, \bar \theta \in \Theta.$$
\label{apx:ass:2}
\end{assumpA}
\begin{assumpA}[Learning rate schedule]
Let $\eta_\theta^t$ be the learning rate for updating policy parameters $\theta$, then,
$$\sum_{t=0}^{\infty} \eta_\theta^t = \infty,
~~~~ \sum_{t=0}^{\infty} (\eta_\theta^t)^2 < \infty.$$
\label{apx:ass:3}
\end{assumpA}

All the assumptions (\ref{apx:ass:1}-\ref{apx:ass:3}) are satisfied under standard policy parameterization techniques (linear-function/neural-networks with softmax) and appropriately set learning rates.

\begin{lemma}
Under Assumptions \eqref{apx:ass:1}-\eqref{apx:ass:3}, the SAS policy gradient algorithm causes $\nabla J(\theta_t) \to 0$ as $t \to \infty$, with probability one.
% $\Pr(\underset{{t\to \infty}}{\lim} \nabla J(\theta_t) = 0) = 1$
\end{lemma}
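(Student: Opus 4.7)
The plan is to cast the update as a standard stochastic gradient ascent recursion and invoke a classical convergence theorem for stochastic approximation (e.g., Proposition 4.1 of Bertsekas and Tsitsiklis, 1996, or Borkar's ODE method). Concretely, the update performed by Algorithm~\ref{alg1} has the form $\theta_{t+1} = \theta_t + \eta_\theta^t \Delta_t$, where $\Delta_t$ is the per-trajectory (or per-step) estimator obtained from the expression in Lemma~\ref{lemma:SASPG}. By Lemma~\ref{lemma:SASPG}, $\mathbb{E}[\Delta_t \mid \theta_t] = \nabla J(\theta_t)$, so the recursion is of the canonical form $\theta_{t+1} = \theta_t + \eta_\theta^t (\nabla J(\theta_t) + M_t)$ for a martingale difference noise $M_t$.

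The three conditions one must verify to apply the standard theorem are: (i) the step sizes satisfy the Robbins--Monro conditions, which is exactly Assumption~\ref{apx:ass:3}; (ii) $\nabla J$ is Lipschitz, which is Assumption~\ref{apx:ass:2}; and (iii) the noise has conditionally bounded second moment, i.e., $\mathbb{E}[\lVert M_t\rVert^2 \mid \mathcal F_t] \le K(1 + \lVert \nabla J(\theta_t)\rVert^2)$ for some constant $K$. First I would unroll the expectation in Lemma~\ref{lemma:SASPG} to identify an unbiased per-sample estimator using a single trajectory with the likelihood-ratio ``score function'' $\partial \log \pi^\theta(S_t, \mathcal A_t, A_t)/\partial \theta$ and a return-based estimator of $q^\theta(S_t, A_t)$; an unbiasedness argument analogous to the MDP policy gradient theorem then gives (i)--(ii) above immediately from Assumptions~\ref{apx:ass:1}--\ref{apx:ass:2}.

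The main obstacle is (iii), bounding the variance of the gradient estimator. Here I would use three facts: rewards are bounded in $[-R_{\max}, R_{\max}]$ and $\gamma < 1$, so $\lvert q^\theta(s,a)\rvert \le R_{\max}/(1-\gamma)$ uniformly in $(\theta, s, a)$; the base action set $\mathcal B$ is finite, so the sum over $a \in \alpha$ and over $\alpha \in 2^{\mathcal B}$ contributes only a finite combinatorial factor; and Assumption~\ref{apx:ass:1} together with Assumption~\ref{apx:ass:2} implies the score function $\partial \log \pi^\theta / \partial \theta$ is uniformly bounded on $\Theta$ for standard parameterizations (e.g., softmax over a linear or neural score), which is exactly the setting noted after Assumption~\ref{apx:ass:3}. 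Combining these bounds yields a uniform second-moment bound on $\Delta_t$, hence on $M_t$, which is stronger than what (iii) requires.

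With (i)--(iii) in hand, the standard stochastic gradient ascent theorem yields that $J(\theta_t)$ converges almost surely and that $\liminf_{t \to \infty} \lVert \nabla J(\theta_t)\rVert = 0$ with probability one; strengthening this to $\nabla J(\theta_t) \to 0$ almost surely follows from the square-summability of $\eta_\theta^t$ and the Lipschitz smoothness of $\nabla J$ by the usual contradiction argument: if $\lVert \nabla J(\theta_t)\rVert$ were to exceed some $\varepsilon > 0$ infinitely often while its $\liminf$ is zero, the Lipschitz bound would force $\sum_t \eta_\theta^t \lVert \nabla J(\theta_t)\rVert^2$ to diverge, contradicting the almost-sure bounded increments implied by $\sum_t (\eta_\theta^t)^2 < \infty$. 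The details of this last step are the only place where care is needed beyond citing an off-the-shelf theorem; the rest is a direct specialization of classical results to the SAS setting via Lemma~\ref{lemma:SASPG}.
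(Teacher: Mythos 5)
Your proposal is correct and follows essentially the same route as the paper: the paper's proof simply invokes the Bertsekas--Tsitsiklis gradient-methods-with-errors result under Assumptions A1--A3 (with unbiased gradient estimates from Lemma~1), obtaining the dichotomy that either $J(\theta_t) \to \infty$ or $\nabla J(\theta_t) \to 0$, and rules out the former because $J(\theta) \le R_{\max}/(1-\gamma)$. You spell out the verification steps (unbiasedness, Robbins--Monro step sizes, Lipschitz gradient, second-moment noise bound, and the liminf-to-limit strengthening) that the paper leaves to the citation, but the underlying argument is the same.
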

\begin{proof}
 See Appendix B.%\ref{apx:convergence}.
\end{proof}

\emph{Natural policy gradient} algorithms \citep{kakade2002natural} extend policy gradient algorithms to follow the \emph{natural gradient} of $J$ \citep{amari1998natural}. 
In essence, whereas policy gradient methods perform gradient ascent in the space of policy parameters by computing the gradient of $J$ as a function of the parameters $\theta$, natural policy gradient methods perform gradient ascent in the space of policies (which are probability distributions) by computing the gradient of $J$ as a function of the policy, $\pi$. 
Thus, whereas policy gradient implicitly measures distances between policies by the Euclidean distance between their policy parameters, natural policy gradient methods measure distances between policies using notions of distance between probability distributions. 
In the most common form of natural policy gradients, the distances between policies are measured using a Taylor approximation of \textit{Kullback}-\textit{Leibler divergence} (KLD). 
By performing gradient ascent in the space of policies rather than the space of policy parameters, the natural policy gradient becomes invariant to how the policy is parameterized \citep{thomas2018decoupling}, which can help to mitigate the vanishing gradient problem in neural networks and improve learning speed \citep{amari1998natural}.

The natural policy gradient (using a Taylor approximation of KLD to measure distances) is $\widetilde \nabla J(\theta)\coloneqq F_\theta^{-1}\nabla J(\theta)$ where $F_\theta$ is the \emph{Fisher information matrix} (FIM) associated with the policy $\pi_\theta$. 
Although the FIM is a well-known quantity, it is typically associated with a parameterized probability distribution. 
Here, $\pi_\theta$ is a \emph{collection} of probability distributions---one per state. 
This raises the question of what $F_\theta$ should be when computing the natural policy gradient. 
Following the work of \citet{bagnell2003covariant} for MDPs, we show that the FIM, $F_\theta$, for computing the natural policy gradient for a SAS-MDP can also be derived by viewing $\pi_\theta$ as a  distribution over possible \emph{trajectories} (sequences of states, available action sets and executed actions). 
\begin{prop}[Fisher Information Matrix]
For a policy, parameterized using weights $\theta$, let $\psi^\theta(s, \alpha, a) \coloneqq$ $ \partial \log \pi^\theta (s, \alpha, a)/\partial \theta$, then the Fisher information matrix is,
\begin{align}
    \footnotesize
    F_\theta =& \sum_{t=0}^{\infty} \sum_{s \in \mathcal S}\gamma^t \Pr(S_{t}=s|\theta)\!\!\sum_{\alpha \in 2^{\mathcal{B}}} \!\!\! \Big (\varphi(s, \alpha)
    \\
    &~~~~~~~~~~~~ \sum_{a \in \alpha}\pi^\theta(s,\alpha, a)\psi^\theta(s, \alpha, a) \psi^\theta(s, \alpha, a)^\top \Big).
\end{align}
\end{prop}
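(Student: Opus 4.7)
The plan is to follow the trajectory-based derivation of the Fisher information matrix due to Bagnell and Schneider (2003), adapted to the SAS-MDP setting. I would view $\pi^\theta$ as inducing a distribution over trajectories $\tau = (S_0, \mathcal{A}_0, A_0, S_1, \mathcal{A}_1, A_1, \ldots)$, compute the score function of that trajectory distribution, and then invoke the standard identity $F_\theta = \mathbb{E}_\tau[(\nabla_\theta \log p^\theta(\tau))(\nabla_\theta \log p^\theta(\tau))^\top]$. To obtain the $\gamma^t$ weighting that appears in the stated expression, I would adopt the usual discounted-trajectory convention in which after each executed action the process terminates with probability $1-\gamma$, so that action $A_t$ is actually executed with probability $\gamma^t$.

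First I would write the joint likelihood of a trajectory of (possibly random) length $T$:
\[
p^\theta(\tau) = d_0(S_0)\prod_{t=0}^{T-1}\varphi(S_t,\mathcal{A}_t)\,\pi^\theta(S_t,\mathcal{A}_t,A_t)\,\mathcal{P}(S_t,A_t,S_{t+1}),
\]
multiplied by the termination factors, which do not depend on $\theta$. Since only the policy terms depend on $\theta$, the score function collapses to $\nabla_\theta \log p^\theta(\tau) = \sum_{t=0}^{T-1}\psi^\theta(S_t,\mathcal{A}_t,A_t)$. Expanding the outer product of this score gives a double sum with diagonal terms (same $t$) and off-diagonal cross terms (different $t$).

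The crucial step is to show that all cross terms vanish. For $t' > t$, I would condition on the history up through $(S_{t'},\mathcal{A}_{t'})$ and use the identity
\[
\sum_{a \in \mathcal{A}_{t'}}\pi^\theta(S_{t'},\mathcal{A}_{t'},a)\,\psi^\theta(S_{t'},\mathcal{A}_{t'},a) = \nabla_\theta \sum_{a \in \mathcal{A}_{t'}}\pi^\theta(S_{t'},\mathcal{A}_{t'},a) = \nabla_\theta 1 = 0,
\]
so that the conditional expectation of $\psi^\theta_{t'}$ given the past is zero. The tower property then annihilates every cross term, leaving only the diagonal contributions $\mathbb{E}[\psi^\theta_t (\psi^\theta_t)^\top]$, each weighted by $\Pr(\text{action at }t\text{ is executed}) = \gamma^t$. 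Finally I would expand each diagonal expectation using the joint distribution of $(S_t,\mathcal{A}_t,A_t)$, namely $\Pr(S_t=s\mid\theta)\,\varphi(s,\alpha)\,\pi^\theta(s,\alpha,a)$, to recover exactly the triple sum over $s$, $\alpha$, and $a$ in the claim.

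The main obstacle is the bookkeeping around the discounted trajectory construction: one must justify the termination-probability device, the interchange of the infinite sum over $t$ with expectations, and the conditioning arguments used to kill the cross terms. Under Assumption A1 and the boundedness conditions invoked elsewhere in the paper, these steps are routine but must be executed carefully; once they are handled, the remainder of the argument is algebraic rearrangement that mirrors the MDP derivation of \citet{bagnell2003covariant}.
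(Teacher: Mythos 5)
Your proposal is correct, and it reaches the stated expression by a genuinely different mechanism than the paper's proof, even though both share the Bagnell--Schneider framing of $\pi^\theta$ as a distribution over trajectories $(s_0,\alpha_0,a_0,s_1,\dots)$. The paper never touches the outer product of the full trajectory score: it instead invokes the identity $F_\theta = -\mathbb{E}\left[\partial^2 \log \Pr(\mathscr T_\theta)/\partial\theta^2\right]$, so that after taking the log of the trajectory likelihood only $\sum_t \partial^2 \log\pi^\theta(s_t,\alpha_t,a_t)/\partial\theta^2$ survives, cross terms never arise at all, and the per-state outer-product form is recovered at the very end by applying the same Hessian--score identity again to each conditional distribution $\pi^\theta(s,\alpha,\cdot)$. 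You work with $F_\theta = \mathbb{E}\bigl[(\nabla_\theta\log p^\theta(\tau))(\nabla_\theta\log p^\theta(\tau))^\top\bigr]$ directly, which forces you to dispose of the off-diagonal terms; your conditioning argument, using $\sum_{a\in\alpha}\pi^\theta(s,\alpha,a)\psi^\theta(s,\alpha,a)=\nabla_\theta 1=0$ together with the tower property, does this correctly (note that $\varphi$ and $\mathcal P$ carry no $\theta$-dependence, so the score of the future given the past is exactly the future policy scores). What your route buys is a cleaner account of the $\gamma^t$ weighting: the termination-probability construction makes the discounting an honest consequence of the trajectory model, whereas the paper simply states that the terms are ``discounted appropriately with $\gamma$,'' which is really a definitional convention for the discounted FIM rather than a derived step. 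What the paper's route buys is the complete absence of cross-term bookkeeping and of any need to justify interchanging the infinite sum over $t$ with the expectation of a product of scores, since second derivatives of a product of likelihood factors in log-space separate term by term. Both arguments then finish identically, marginalizing the $t$-th term over the joint law $\Pr(S_t=s\mid\theta)\,\varphi(s,\alpha)\,\pi^\theta(s,\alpha,a)$.
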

\begin{proof}
See Appendix C.%\ref{apx:sasnpg}.
\end{proof}

Furthermore, \citet{kakade2002natural} showed that many terms in the definition of the natural policy gradient cancel, providing a simple expression for the natural gradient 
%
%This simple expression also makes it evident that the natural policy gradient 
which can be estimated with time linear in the number of policy parameters per time step. 
We extend the result of \citet{kakade2002natural} to the SAS-MDP formulation in the following lemma:
\begin{lemma}[SAS Natural Policy Gradient]
\label{lemma:sasnpg}
Let $w$ be a parameter such that,
\begin{align}
   \frac{\partial }{\partial w} \mathbb{E} \left [\frac{1}{2} \sum_t^\infty \gamma^t \left(\psi^\theta(S_t, \mathcal A_t, A_t)^\top w - q^\theta(S_t, A_t)\right)^2 \right] = 0, \label{eqn:fisher_grad}
\end{align}
then for all $s \in \mathcal S$ in $\mathcal M'$,  $\widetilde \nabla J(\theta) = w.$
% \begin{align}
%     \widetilde \nabla J(\theta) = w.
% \end{align}
\end{lemma}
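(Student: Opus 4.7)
The plan is to follow the classical Kakade (2002) argument, adapted to the SAS-MDP setting using the FIM expression just proven. The strategy is to compute the first-order optimality condition of the quadratic objective in \eqref{eqn:fisher_grad}, recognize one side as $F_\theta w$ and the other side as $\nabla J(\theta)$, and then invert.

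First I would expand the expectation in \eqref{eqn:fisher_grad} by conditioning on $(S_t, \mathcal{A}_t, A_t)$ and using that, under $\pi^\theta$, the joint density of these variables at time $t$ factors as $\Pr(S_t = s \mid \theta)\,\varphi(s,\alpha)\,\pi^\theta(s,\alpha,a)$. Setting the gradient with respect to $w$ equal to zero yields the normal equation
\begin{align}
\sum_{t=0}^\infty \sum_{s} \gamma^t \Pr(S_t = s \mid \theta) \sum_{\alpha \in 2^{\mathcal{B}}} \varphi(s,\alpha) \sum_{a \in \alpha} \pi^\theta(s,\alpha,a)\, \psi^\theta(s,\alpha,a)\, \psi^\theta(s,\alpha,a)^\top w \nonumber \\
= \sum_{t=0}^\infty \sum_{s} \gamma^t \Pr(S_t = s \mid \theta) \sum_{\alpha \in 2^{\mathcal{B}}} \varphi(s,\alpha) \sum_{a \in \alpha} \pi^\theta(s,\alpha,a)\, \psi^\theta(s,\alpha,a)\, q^\theta(s,a). \nonumber
\end{align}

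Next I would identify both sides. By the Fisher information matrix expression in the preceding proposition, the left-hand side is exactly $F_\theta w$. For the right-hand side, I would use the log-derivative identity $\pi^\theta(s,\alpha,a)\,\psi^\theta(s,\alpha,a) = \partial \pi^\theta(s,\alpha,a)/\partial \theta$, which converts the inner sum into $\sum_{a \in \alpha} q^\theta(s,a)\, \partial \pi^\theta(s,\alpha,a)/\partial \theta$. This is precisely the summand appearing in Lemma \ref{lemma:SASPG}, so the right-hand side collapses to $\nabla J(\theta)$. Combining, $F_\theta w = \nabla J(\theta)$, and assuming $F_\theta$ is invertible we conclude $w = F_\theta^{-1} \nabla J(\theta) = \widetilde{\nabla} J(\theta)$.

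The main obstacle is not algebraic but mostly bookkeeping: one must justify interchanging the gradient in $w$ with the infinite sum and expectation in \eqref{eqn:fisher_grad}, which follows from boundedness of rewards (hence of $q^\theta$), finiteness of the base action set, and discounting $\gamma < 1$, yielding uniform summability. A secondary subtlety is the implicit assumption that the stationary point in $w$ is unique, which requires $F_\theta$ to be positive definite; this is standard in the natural gradient literature and is typically assumed as a regularity condition on the parameterization, so I would simply note it rather than prove it. Everything else is a direct substitution of the SAS policy gradient identity from Lemma \ref{lemma:SASPG} into the normal equation.
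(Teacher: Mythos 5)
Your proposal is correct and follows essentially the same route as the paper's proof: differentiate the weighted least-squares objective in $w$ to obtain the normal equation, identify the left-hand side as $F_\theta w$ via the Fisher information matrix expression, identify the right-hand side as $\nabla J(\theta)$ via the identity $\pi^\theta(s,\alpha,a)\,\psi^\theta(s,\alpha,a)=\partial \pi^\theta(s,\alpha,a)/\partial\theta$ together with Lemma \ref{lemma:SASPG}, and invert $F_\theta$. The only difference is presentational (the paper starts from $\widetilde\nabla J(\theta)=F_\theta^{-1}\nabla J(\theta)$ and substitutes the normal equation, whereas you equate the two sides first), and your added remarks on interchanging differentiation with the infinite sum and on positive definiteness of $F_\theta$ make explicit regularity points the paper leaves implicit.
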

\begin{proof}
See Appendix C.%\ref{apx:sasnpg}.
\end{proof}
From Lemma \ref{lemma:sasnpg}, we can derive a computationally efficient natural policy gradient algorithm by using the well-known \emph{temporal difference} algorithm \citep{sutton2018reinforcement}, modified to work with SAS-MDPs, to estimate $q^\theta$ with the approximator $\psi^\theta(S_t,\mathcal A_t, A_t)^\top w $, and then using the update $\theta \gets \theta + \eta w$. 
This algorithm, which is the SAS-MDP equivalent of NAC-TD \citep{bhatnagar2008incremental,Degris2012,Morimura2005,Thomas2012}, is provided in Algorithm 2 in Appendix E.%\ref{apx:algorithms}. 

\section{Adaptive Variance Mitigation}
%
% \todo{Motivate the problem with variance better.}
% \todo{Talk about other methods and why we chose our method.}
\label{sec:var}
In the previous section, we derived (natural) policy gradient algorithms for SAS-MDPs. %showed how policy gradient based methods can be generalized for SAS-MDPs.
While these algorithms avoid the divergence of SAS-Q-learning, they suffer from the high variance of policy gradient estimates \citep{kakade2003sample}.
As a consequence of the additional stochasticity that results from stochastic action sets, %action (un)availability, 
this problem can be even more severe in the SAS-MDP setting. %get more severe. 
In this section, we %aim at leveraging 
leverage insights from the Bellman equation for SAS-MDPs, provided in \eqref{eqn:SAS-bellman}, to reduce the variance of policy gradient estimates. %mitigate the variance in the policy gradient estimates. 

One of the most popular methods to reduce variance is the use of a state-dependent baseline $b(s)$. 
\citet{sutton2000policy} showed that, for any state-dependent baseline $b(s)$:
\begin{align}
    \nabla J(\theta) = \mathbb{E}\left[\sum_{t=0}^\infty \gamma^t \psi^\theta(s,\alpha, a)\middle(q^\theta(s,a) - b(s) \middle)\right]. \label{eqn:baseline}
\end{align}
For any random variables $X$ and $Y$, we know that the variance of $X - Y$ is given by $\text{var}(X-Y) = \text{var}(X) + \text{var}(Y) - 2 \text{cov}(X, Y)$, where cov stands for covariance.
Therefore, the variance of $X - Y$ is lesser than variance of $X$ if $2 \text{cov}(X, Y) >  \text{var}(Y)$.
As a result, any state dependent baseline $b(s)$  whose value is sufficiently correlated to the expected return, $q^\theta(s, a)$, can be used to reduce the variance of the sample estimator of \eqref{eqn:baseline}.
A baseline dependent on both the state and action can have higher correlation with $q^\theta(s,a)$, and could therefore reduce variance further. % to the expected return and can potentially reduce variance further.
However, such action dependent baselines cannot be used directly, as they can result in biased gradient estimates. %make gradient estimates biased.
Developing such baselines remains an active area of research for MDPs \citep{thomas2017policy,grathwohl2017backpropagation,liu2017action,wu2018variance,tucker2018mirage} and is largely complementary to our purpose.
Further, even the optimal state-dependent baseline \cite{greensmith2004variance}, which leads to the minimum variance gradient estimator, is not feasible to compute and only under certain restrictive assumptions reduces to the common choice of state-value function estimator, $\hat v(s)$.
Therefore, in the following, we propose multiple baselines that  are easy to compute, and then combine them optimally.

We now introduce a baseline for SAS-MDPs that lies between state-dependent and state-action-dependent baselines. 
Like state-dependent baselines, these new baselines do not introduce bias into gradient estimates. 
However, like action-dependent baselines these new baselines include \emph{some} information about the chosen actions. 
Specifically, we propose baselines that depend on the state, $S_t$, and available action set $\mathcal A_t$, but not the precise action, $A_t$. %These baselines for SAS-MDPs do For SAS-MDPs, we can take a step further.

Recall from the SAS Bellman equation \eqref{eqn:SAS-bellman} that the state-value function for SAS-MDPs can be written as,
$ v^\theta(s) =\sum_{\alpha \in 2^{\mathcal B}}\varphi(s, \alpha)\sum_{a \in \alpha}  \pi^\theta(s, \alpha, a) q^\theta(s, a)$.
While we cannot directly use a baseline dependent on the action sampled from $\pi^\theta$, we \textit{can} use baseline dependent on the sampled action \textit{set}. 
We consider a new baseline which leverages this information about the sampled action set $\alpha$. 
This baseline is $
    \bar q(s, \alpha) \coloneqq \sum_{a \in \alpha} \pi^\theta(s, \alpha, a) \hat q(s, a),
$
where $\hat q$ is a learned estimator of the state-action value function, and $\bar q$ represents its expected value under the current policy, $\pi^\theta$, conditioned on the sampled action set $\alpha$.

In principle, we expect $\bar q(S_t,\mathcal A_t)$ to be more correlated with $q^\theta(S_t,A_t)$ as it explicitly conditions on the action set and does not compute an average over all action sets possible, like $\hat v$. 
Practically, however, estimating $q$ values can be harder than estimating $v$.
This can be attributed to the fact that with the same number of training samples, the number of parameters to learn in $\hat q$ is more than those in an estimate of $v^\theta$. %, nearly by a factor of total number of actions.
This poses a new dilemma of deciding when to use which baseline. 
To get the best of both, we consider using a weighted combination of $\hat v(S_t)$ %(an estimate of $v^\theta(S_t)$)
and $\bar q(S_t,\mathcal A_t)$. 
In the following property we establish that using any weighted combination of these two baselines results in an unbiased estimate of the SAS policy gradient. 

\IncMargin{1em}
	\begin{algorithm2e}[t]
		$\mathbf{A} = \lbrack\lambda_1,\lambda_2 \rbrack^\top = \lbrack -0.5, -0.5 \rbrack^\top$
		\Comment{Initialize $\lambda$'s}\\
		\For {$episode = 0,1,2...$}{
			\nonl \textcolor[rgb]{0.5,0.5,0.5}{\# Collect transition batch  using $\pi^\theta$}
			\\
    		$\mathds{B} = \{(s_0, \alpha_0, a_0, r_0), ..., (s_T, \alpha_T, a_T, r_T)  \}$  
    		\\
    		$\hat G(s_t) = \sum_{k=0}^{T -t} \gamma^k r_{t+k}  $
    		\\
    		\vspace{8pt}
    		\nonl \textcolor[rgb]{0.5,0.5,0.5}{\# Perform update on parameters using batch $\mathbb{B}$}
    		\\
    		$\psi^\theta (s, \alpha, a) = \frac{ \partial \log \pi^\theta(s, \alpha, a)}{\partial \theta}$
    		\\
    		$\varpi \leftarrow \varpi + \eta_\varpi (\hat G(s) - \hat v^\varpi(s))\frac{\partial \hat v^\varpi(s)}{\partial \varpi} $  	
    		\\
    		$\omega \leftarrow \omega + \eta_\omega (\hat G(s) -  \bar q^{\omega} (s, \alpha))\frac{\partial \bar q^{\omega} (s, \alpha)}{\partial \omega}$  
    		\\
    		$\theta \leftarrow \theta + \eta_\theta (\hat G(s) +\lambda_1\hat v^\varpi(s) + \lambda_2  \bar q^{\omega} (s, \alpha)) \psi^\theta (s, \alpha, a)$ \Comment{ Update $\pi^{\theta}$}
    		\\
    		\vspace{8pt}
    		\nonl \textcolor[rgb]{0.5,0.5,0.5}{\# Automatically tune hyper-parameters for variance reduction using $\mathbb{B}$}
    		\\
    		$\mathbf{B} = \lbrack  \psi^\theta(s,\alpha, a) \hat v^\varpi(s) , \psi^\theta(s,\alpha, a) \bar q^\omega(s, \alpha)\rbrack^\top$ 
            \\
            $\mathbf{C} = \lbrack \psi^\theta(s,\alpha, a) \hat G(s) \rbrack^\top$
            \\
    		$\mathbf{\hat A} \leftarrow - (\mathbb{E}[\mathbf{B}^\top \mathbf{B}])^{-1}\mathbb{E}[\mathbf{B}^\top \mathbf{C}]$
    		\\
    		$\mathbf{A} \leftarrow \eta_\lambda \mathbf{A} + (1 - \eta_\lambda) \mathbf{\hat A} $
    		\Comment{Update $\lambda$'s}
		}  
		\label{alg1}  
		\caption{Stochastic Action Set Policy Gradient (SAS-PG)}
	\end{algorithm2e}
	\DecMargin{1em}    

\begin{prop}[Unbiased estimator]
\label{prop:unbiased}
Let $\hat J(s, \alpha, a, \theta) \coloneqq \psi^\theta(s,\alpha, a)\left(q^\theta(s, a) + \lambda_1 \hat v(s) + \lambda_2 \bar q(s, \alpha) \right)$ and  $d^\pi(s) \coloneqq (1-\gamma)\sum_t^\infty \gamma^t \Pr(S_t=s)$, then for any values of $\lambda_1 \in \mathbb{R}$ and $\lambda_2 \in \mathbb{R}$,
\begin{align}
    %  
% \begin{align}
    \nabla J(\theta) &= \mathbb{E}\left[ \hat J(s, \alpha, a, \theta) \middle| d^\pi, \varphi, \pi \right].
    %= \mathbb{E}\left[\psi^\theta(s,\alpha, a)q^\theta(s,a) \middle)\right].
\end{align}
\end{prop}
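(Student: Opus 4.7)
The plan is to show the result by linearity of expectation: split $\mathbb{E}[\hat J(s,\alpha,a,\theta)]$ into three terms, identify the first with the SAS policy gradient given by Lemma \ref{lemma:SASPG} (up to the $d^\pi$ normalization convention), and show that each of the two baseline terms vanishes. So the entire argument reduces to verifying that a baseline which depends only on $(s,\alpha)$ but not on $a$ contributes zero in expectation under the distribution $a \sim \pi^\theta(s,\alpha,\cdot)$.

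Concretely, I would first write
\begin{align}
\mathbb{E}\!\left[\hat J(s,\alpha,a,\theta) \,\middle|\, d^\pi, \varphi, \pi\right]
&= \mathbb{E}\!\left[\psi^\theta(s,\alpha,a)\, q^\theta(s,a)\right] \\
&\quad + \lambda_1\, \mathbb{E}\!\left[\psi^\theta(s,\alpha,a)\, \hat v(s)\right] \\
&\quad + \lambda_2\, \mathbb{E}\!\left[\psi^\theta(s,\alpha,a)\, \bar q(s,\alpha)\right],
\end{align}
where each expectation is over $s \sim d^\pi$, $\alpha \sim \varphi(s,\cdot)$, and $a \sim \pi^\theta(s,\alpha,\cdot)$. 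The first term matches the expression in Lemma \ref{lemma:SASPG} after using $\psi^\theta \pi^\theta = \partial \pi^\theta/\partial \theta$ and collapsing $\sum_t \gamma^t \Pr(S_t=s)$ into $d^\pi(s)$; thus it equals $\nabla J(\theta)$ (up to the standard $(1-\gamma)$ normalization absorbed into the step size).

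For the two baseline terms, I would use the tower property by conditioning on $(s,\alpha)$. Since both $\hat v(s)$ and $\bar q(s,\alpha)$ are measurable with respect to $(s,\alpha)$ and therefore constant with respect to the inner sampling $a \sim \pi^\theta(s,\alpha,\cdot)$, each reduces to the inner quantity
\begin{align}
\sum_{a \in \alpha} \pi^\theta(s,\alpha,a)\, \psi^\theta(s,\alpha,a)
= \sum_{a \in \alpha} \frac{\partial \pi^\theta(s,\alpha,a)}{\partial \theta}
= \frac{\partial}{\partial \theta} \sum_{a \in \alpha} \pi^\theta(s,\alpha,a)
= \frac{\partial}{\partial \theta} 1 = 0,
\end{align}
where I use Assumption \ref{apx:ass:1} to interchange derivative and the finite sum over $a \in \alpha$, and the fact that $\pi^\theta(s,\alpha,\cdot)$ is a probability distribution over $\alpha$. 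Hence both the $\lambda_1$ and $\lambda_2$ terms vanish for any choice of the scalars, leaving $\nabla J(\theta)$.

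There is no real obstacle here; the only subtlety worth flagging is that the argument crucially relies on $\bar q(s,\alpha)$ being independent of the sampled action $a$. Had the baseline depended on $a$, the inner sum would not telescope to $\nabla 1$, and bias would appear—this is exactly why we introduced a baseline conditioned on the sampled action \emph{set} $\alpha$ rather than on the action itself, and why the property holds for arbitrary $\lambda_1, \lambda_2 \in \mathbb{R}$.
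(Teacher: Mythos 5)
Your proposal is correct and follows essentially the same route as the paper's proof: decompose the expectation by linearity, identify the $q^\theta$ term with the SAS policy gradient from Lemma \ref{lemma:SASPG}, and kill the baseline terms via $\sum_{a\in\alpha}\pi^\theta(s,\alpha,a)\psi^\theta(s,\alpha,a)=\frac{\partial}{\partial\theta}\sum_{a\in\alpha}\pi^\theta(s,\alpha,a)=0$, exactly because $\hat v(s)$ and $\bar q(s,\alpha)$ do not depend on the sampled action. Your explicit remark about the $(1-\gamma)$ normalization in $d^\pi$ is a minor point the paper glosses over, but it does not change the argument.
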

\begin{proof}
See Appendix D. %\ref{apx:variance}.
\end{proof}

\begin{figure*}[t]
		\centering
		\includegraphics[width=0.27\textwidth]{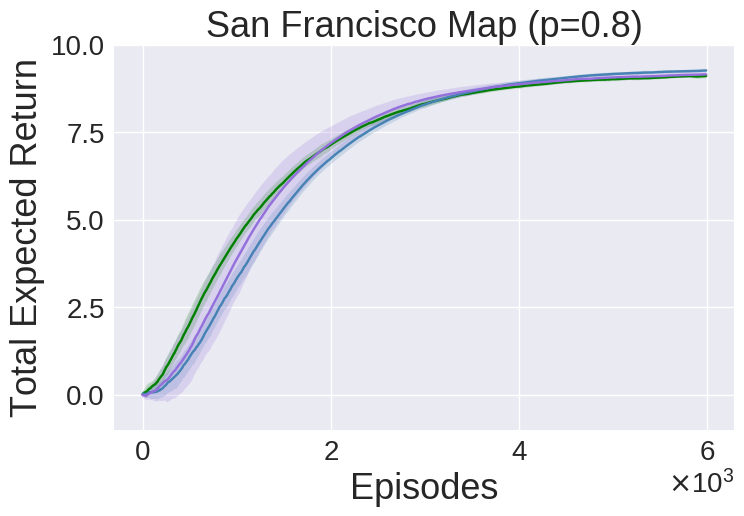} \hfill
		\includegraphics[width=0.27\textwidth]{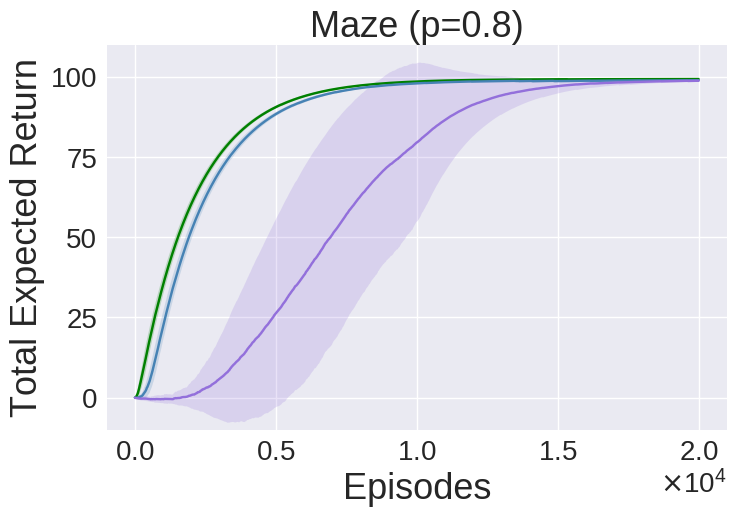} \hfill
		\includegraphics[width=0.27\textwidth]{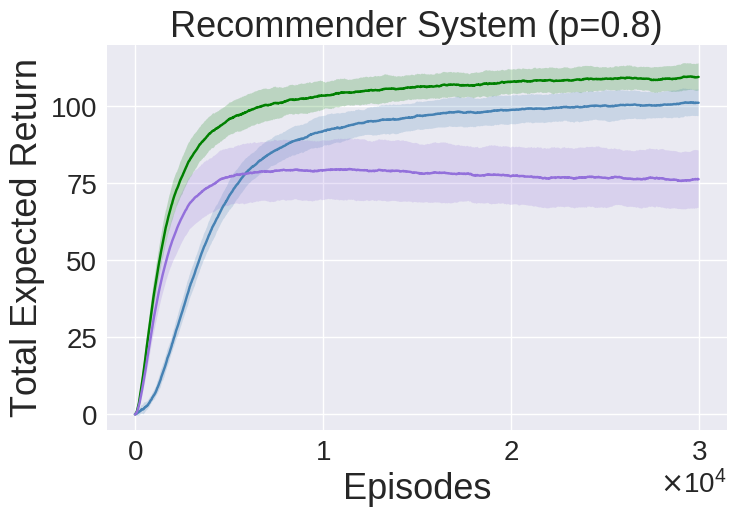} \hfill
		%\\ \
		%%\vspace{-4pt}
		\includegraphics[width=0.12\textwidth]{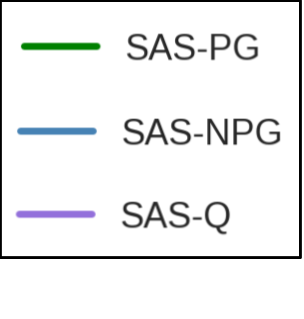}
		%\\
		%%\vspace{-4pt}
		\includegraphics[width=0.27\textwidth]{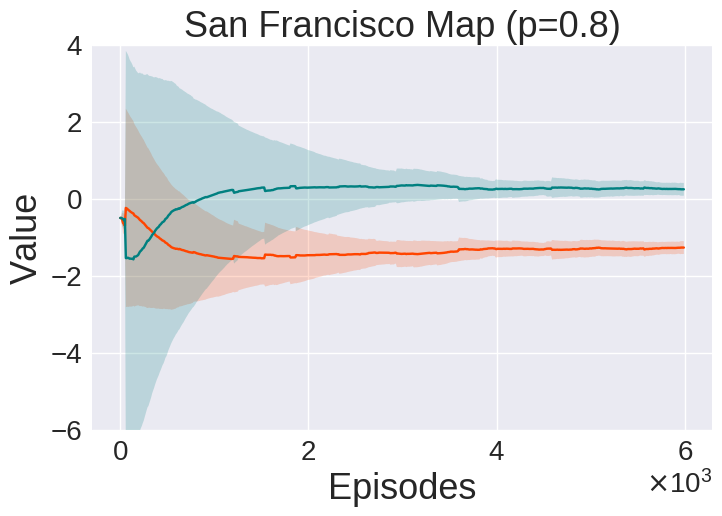} \hfill
		\includegraphics[width=0.27\textwidth]{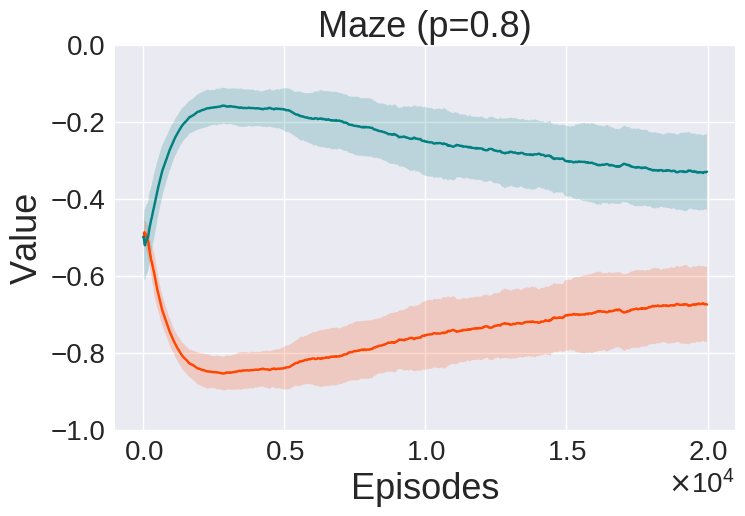} \hfill
		\includegraphics[width=0.27\textwidth]{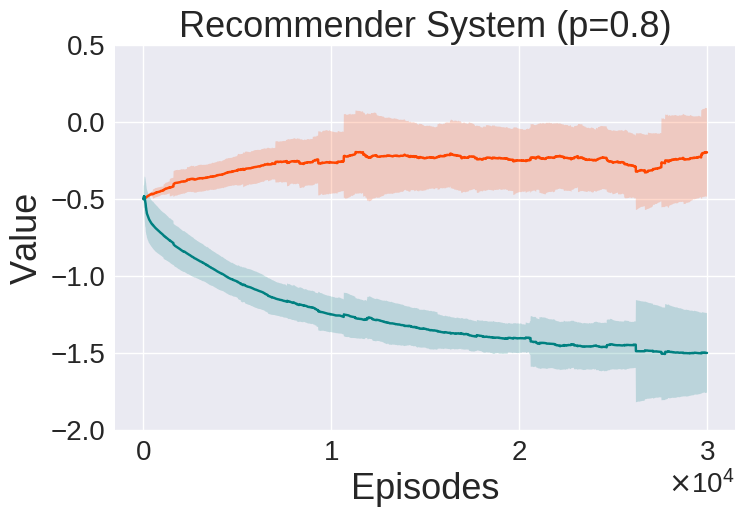} \hfill
		%\\
		%%\vspace{-4pt}
		\includegraphics[width=0.12\textwidth]{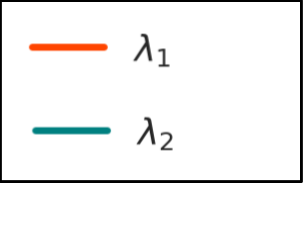}
		%%\vspace{-10pt}
		\caption{(Top) Best performing learning curves on the domains considered.
		The probability of any action being available in the action set is $0.8$. 
		(Bottom) Autonomously adapted values of $\lambda_1$ and $\lambda_2$ associated with $\hat v$ and $\bar q$, respectively, for the SAS-PG results.
		Shaded regions correspond to one standard deviation obtained using $30$ trials.}
		\label{Fig:plots}
\end{figure*}	
The question remains: what values should be used for $\lambda_1$ and $\lambda_2$ for combining $\hat v$ and $\bar q~$? 
Similar problems of combining different estimators has been studied in statistics literature \citep{graybill1959combining,meir1994bias} and more recently for combining control variates \citep{wang2013variance,geffner2018using}.  
Building upon their ideas, rather than leaving $\lambda_1$ and $\lambda_2$ as open hyperparameters, we propose a method for automatically adapting $\mathbf A=[\lambda_1,\lambda_2]$ for the specific SAS-MDP and current policy parameters, $\theta$. 
%To resolve this issue systematically, we propose automatically adapting $\lambda$ values during training.
%
The following lemma presents an analytic expression for the value of $\mathbf A$ that minimizes a sample-based estimate of the variance of $\hat J$.
\begin{lemma}[Adaptive variance mitigation]
\label{lemma:variance}
If $\mathbf{A} = \lbrack \lambda_1, \lambda_2 \rbrack^\top,$ $\mathbf{B} = \lbrack  \psi^\theta(s,\alpha, a) \hat v(s),\psi^\theta(s,\alpha, a) \bar q(s, \alpha)\rbrack^\top ,$
    and $\mathbf{C} = \lbrack \psi^\theta(s,\alpha, a) q^\theta(s, a) \rbrack^\top$,
where $\mathbf{A} \in \mathbb{R}^{2\times 1}, \mathbf{B} \in \mathbb{R}^{d\times 2}$, and $\mathbf{C} \in \mathbb{R}^{d\times 1}$, then the $\mathbf{A}$ that minimizes the variance of $\hat J$ is given by
\begin{align}
    \mathbf{A} = - \left( \mathbb{E}\middle[\mathbf{B}^\top\mathbf{B}\middle] \right)^{-1} \mathbb{E} \left[ \mathbf{B}^\top\mathbf{C}  \right]. \label{eqn:minvar}
\end{align}
\end{lemma}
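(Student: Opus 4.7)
The plan is to exploit the fact that $\hat J$ is affine in $\mathbf A$, which makes the variance a convex quadratic in $\mathbf A$ that can be minimized in closed form by setting its gradient to zero. First I would rewrite the estimator in the matrix notation introduced in the statement, noting that $\hat J(s,\alpha,a,\theta) = \mathbf C + \mathbf B \mathbf A$, where the two columns of $\mathbf B$ are the score-weighted baselines $\psi^\theta \hat v$ and $\psi^\theta \bar q$, and $\mathbf C = \psi^\theta q^\theta$. This rewriting compresses the entire dependence of $\hat J$ on $(\lambda_1,\lambda_2)$ into a single linear matrix term.

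Second, I would invoke Property \ref{prop:unbiased} (unbiasedness for any choice of $\lambda_1, \lambda_2$) to conclude that $\mathbb{E}[\hat J] = \nabla J(\theta)$ is independent of $\mathbf A$. Interpreting ``variance of $\hat J$'' as the trace of its covariance (equivalently $\mathbb{E}\lVert \hat J\rVert^2 - \lVert \mathbb{E}\hat J\rVert^2$), which is the natural scalar summary for a vector-valued gradient estimate, the subtracted term is constant in $\mathbf A$ and drops out. Minimizing variance therefore reduces to minimizing $\mathbb{E}\lVert \mathbf C + \mathbf B \mathbf A\rVert^2$.

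Third, I would expand this squared norm as $\mathbb{E}[\mathbf C^\top \mathbf C] + 2\,\mathbf A^\top \mathbb{E}[\mathbf B^\top \mathbf C] + \mathbf A^\top \mathbb{E}[\mathbf B^\top \mathbf B]\,\mathbf A$, differentiate with respect to $\mathbf A$, and set the derivative to zero. This yields the normal equations $\mathbb{E}[\mathbf B^\top \mathbf B]\,\mathbf A = -\mathbb{E}[\mathbf B^\top \mathbf C]$, and premultiplying by the inverse gives the claimed formula. The Hessian $2\mathbb{E}[\mathbf B^\top \mathbf B]$ is positive semi-definite, so the stationary point is a global minimizer and not just a critical point.

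The main obstacle is not algebraic, since the calculation above is routine, but conceptual: one must fix a meaning for ``variance'' of a vector-valued estimator, and Property \ref{prop:unbiased} is what makes the natural choice (sum of coordinate variances) collapse cleanly to a quadratic in $\mathbf A$. A secondary technical concern is the invertibility of $\mathbb{E}[\mathbf B^\top \mathbf B]$: if the two baseline directions become nearly collinear (for instance early in training, when $\hat v$ and $\bar q$ closely track each other), one would need to regularize or take a Moore--Penrose pseudoinverse, in which case $\mathbf A$ is no longer unique but the minimal variance is still attained.
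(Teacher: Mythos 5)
Your proposal is correct and follows essentially the same route as the paper's proof: write $\hat J$ as the affine expression $\mathbf{C}+\mathbf{B}\mathbf{A}$, use Property \ref{prop:unbiased} so that $\mathbb{E}[\hat J]=\mathbb{E}[\mathbf{C}]$ is independent of $\mathbf{A}$, expand the (trace-of-covariance) variance as a quadratic in $\mathbf{A}$, and set its derivative to zero to obtain the normal equations. Your extra remarks on positive semi-definiteness of the Hessian and on handling a singular $\mathbb{E}[\mathbf{B}^\top\mathbf{B}]$ are sensible refinements the paper only addresses informally (via the diagonal noise added in the algorithm), but they do not constitute a different argument.
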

\begin{proof}
See Appendix D.% \ref{apx:variance}.
\end{proof}

Lemma \ref{lemma:variance} provides the values for $\lambda_1$ and $\lambda_2$ that result in the minimal variance of $\hat J$. 
Note that the computational cost associated with evaluating the inverse of $\mathbb{E}\left[\mathbf{B}^\top\mathbf{B}\right]$ is negligible because its dimension is always $\mathbb{R}^{2\times 2}$, independent of the number of policy parameters.
Also, Lemma \ref{lemma:variance} provides the optimal values of $\lambda_1$ and $\lambda_2$, which still must be approximated using sample-based estimates of $\mathbf B$ and $\mathbf C$. 
Furthermore, one might use double sampling for $\mathbf B$ to get unbiased estimates of the variance minimizing value of $\mathbf A$ \citep{baird1995residual}. 
However, as Property \ref{prop:unbiased} ensures that estimates of $\hat J$ for any value of $\lambda_1$ and $\lambda_2$ are always unbiased, we opt to use all the available samples for estimating $\mathbb E[\mathbf B^\top \mathbf B]$ and $\mathbb{E}[\mathbf B^\top \mathbf C]$. % computing both the expectations and be data efficient in practice.
%
% 
% %
% Detailed step-by-step pseudocode for optimizing $\mathbf A$, constructing the baselines, and using them within a SAS policy gradient algorithm is provided in %A detailed step-by-step procedure of our proposed method is presented in 
% Algorithm \ref{alg1} in Appendix \ref{apx:algorithms}.
%

\section{Algorithm}
\label{apx:algorithms}
Pseudo-code for the SAS policy gradient algorithm is provided in Algorithm \ref{alg1}.
Let the estimators of $v^\theta$ and $q^\theta$ be
$\hat v^\varpi$ and $\hat q^\omega$, which are parameterized using $\varpi$ and $\omega$, respectively. 
Let $\pi^\theta$ corresponds to the policy parameterized using $\theta$.
Let $\eta_\varpi, \eta_\omega, \eta_\theta$ and $\eta_\lambda$ be the learning-rate hyper-parameters.
We begin by initializing the $\lambda$ values to $-0.5$ each, such that it takes an average of both the baselines and subtracts it off from the sampled return.
In Lines $3$ and $4$, we execute $\pi^\theta$ to observe the trajectory and compute the return.
% In Line $3$, we roll out a trajectory using $\pi^\theta$ and store the transition samples along with that action set $\alpha$ available at each time step.
%
% Using that, the sampled return from states at each time step is computed in Line $4$.
%
Lines $6$ and $7$ correspond to the updates for parameters associated with $\hat v^\varpi$ and $\hat q^\omega$, using their corresponding TD errors \citep{sutton2018reinforcement}.
The policy parameters are then updated using  a  combination of both the baselines.
We drop the $\gamma^t$ dependency for data efficiency \citep{thomas2014bias}.
As per Lemma \ref{lemma:variance}, for automatically tuning the values of $\lambda_1$ and $\lambda_2$, we create the sample estimates of the matrices $\mathbf{B}$ and $\mathbf{C}$ using the transitions from batch $\mathbb{B}$, in Lines $9$ and $10$.
To update the values of $\lambda$'s, we compute $\mathbf{\hat A}$ using the sample estimates of $\mathbb{E}[\mathbf{B}^\top \mathbf{B}]$ and $\mathbb{E}[\mathbf{B}^\top \mathbf{C}]$.
While computing the inverse, a small diagonal noise is added to ensure that inverse exists.
As everything is parameterized using smooth function, we know that the subsequent estimates of $\mathbf{A}$ should not vary a lot.
Since we only have access to the sample estimate of $\mathbf{A}$, we leverage the Polyak-Rupert averaging  in Line $12$ for  stability.
Due to space constraints, the algorithm for SAS natural policy gradient is deferred to Appendix E. %\ref{apx:algorithms}. 

\begin{figure*}[t]
		\centering
		\includegraphics[width=0.27\textwidth]{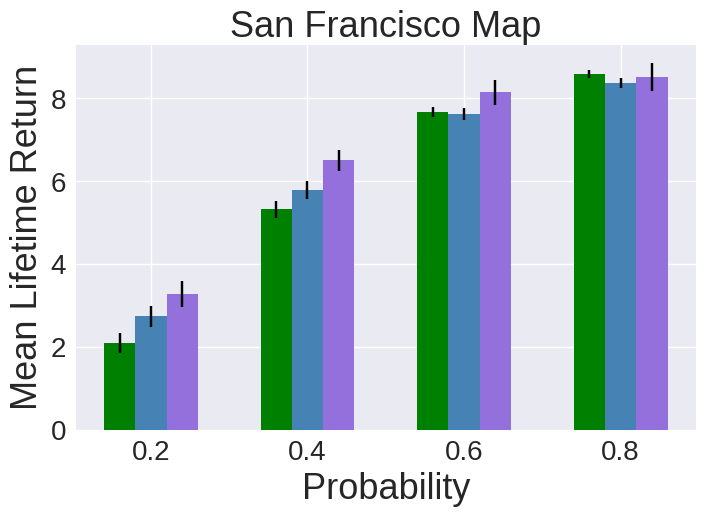} \hfill
		\includegraphics[width=0.27\textwidth]{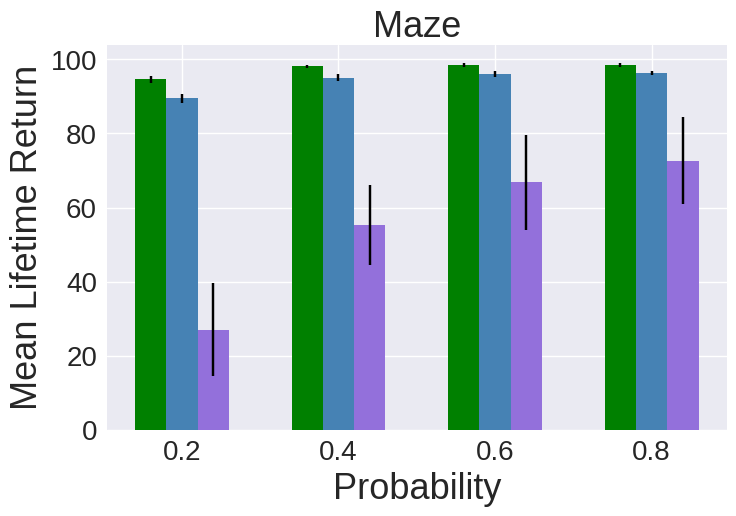} \hfill
		\includegraphics[width=0.27\textwidth]{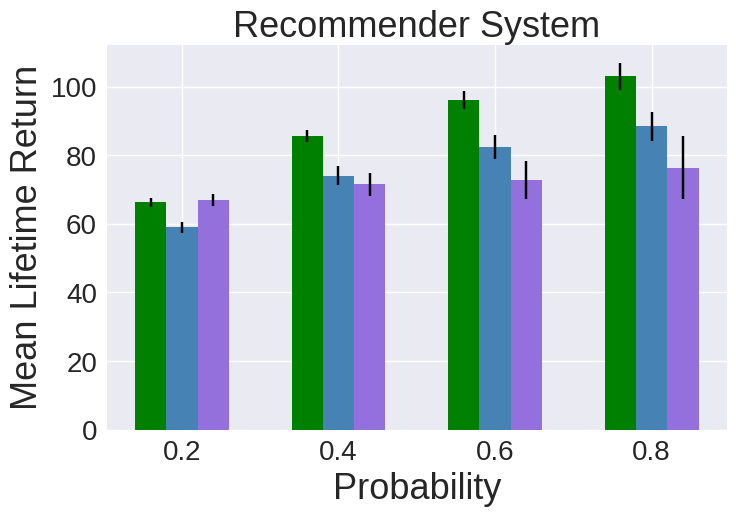} \hfill
		%\\
		%%\vspace{-2pt}
		\includegraphics[width=0.12\textwidth]{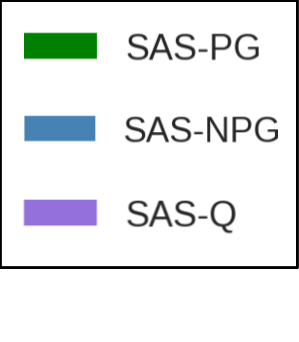}
		%%\vspace{-15pt}
		\caption{Best performances of different algorithms across different values of probabilities for action availability. The error bars correspond to one standard deviation obtained using $30$ trials. }
		\label{Fig:ablation}
\end{figure*}

\section{Empirical Analysis}
In this section we use empirical studies to answer the following three questions: %aim to empirically address the following three major questions, 
(\textbf{a}) How do our proposed algorithms, SAS policy gradient (SAS-PG) and SAS natural policy gradient (SAS-NPG), compare to the prior method SAS-Q-learning?
(\textbf{b}) How does our adaptive variance reduction technique weight the two baselines over the training duration?
(\textbf{c}) What impact does the probability of action availability have on the performances of SAS-PG, SAS-NPG, and SAS-Q-learning? %the algorithms?
%
% \begin{itemize}[leftmargin=*]
%     \item   
%     \item 
%     \item 
% \end{itemize}
%
To evaluate these aspects, we first briefly introduce three domains inspired by real-world problems.
% , where the set of available actions is stochastic. 
%
% Further details about these three domains are provided in Appendix \ref{apx:domain}. 
%
%
%%\vspace{-10pt}
\paragraph{Routing in San Francisco. } 
This task models the problem of finding shortest paths in San Francisco, and was first presented with stochastic actions by \citet{Boutilier2018PlanningAL}. 
Stochastic actions model the concept that certain paths in the road network may not be available at certain times.
%(e.g., due to crashes blocking lanes). 
%Following the work by \citet{Boutilier2018PlanningAL}, this domain is based on stochastic availability of any path in the real-world road network in the San Francisco Bay Area. 
%
%The shortest path between the source and destination locations is sought.
%
A positive reward is provided to the agent when it reaches the destination, while a small penalty is applied at every time step.
% to encourage to agent to reach the goal.% on reaching the destination and a small penalty is incurred for every step taken.
%
We modify the domain presented by \citet{Boutilier2018PlanningAL} so that the starting state of the agent is not one particular node, but rather is uniformly randomly chosen among all possible locations.
% (nodes in the graph).
%
This makes the problem more challenging, since it requires the agent to learn the shortest path from every node. % as shortest path from every node needs to be estimated.
All the states (nodes) are discrete, and edges correspond to the action choices. 
Each edge is made available with some fixed probability.
The overall map is shown in Appendix.
%
%
%
%%\vspace{-10pt}
\paragraph{Robot locomotion task in a maze. } 
In this domain, the agent has to navigate a maze using unreliable actuators.
The agent starts at the bottom left corner and a goal reward is given when it reaches the goal position, marked by a star (see Appendix for the figure).
The agent is penalized at each time step to encourage it to reach the goal as quickly as possible. %Every other steps incurs a small penalty.
The state space is continuous, and corresponds to real-valued Cartesian coordinates of the agent's position.
The agent has $16$ actuators pointing in different directions. 
Turning each actuator on moves the agent in the direction of the actuator.
However, each actuator is unreliable, and is therefore only available with some fixed probability.
%
%%\vspace{-10pt}
\paragraph{Product recommender system.}
In online marketing and sales, product recommendation is a popular problem.
Due to various factors such as stock outage, promotions, delivery issues etc., not all products can be recommended always.
To model this, we consider a synthetic setup of providing recommendation to a user from a batch of $100$ products, each available with some fixed probability and associated with a stochastic reward corresponding to profit.
Each user has a real-valued context, which forms the state space, and the recommender system interacts with a randomly chosen user for $5$ steps.
The goal for the recommender system is to suggest products that maximize total profit.
Often the problem of recommendation is formulated as a contextual bandit or collaborative filtering problem, but as shown by \citet{theocharous2015ad} these approaches fail to capture the long term value of the prediction.
Hence we resort to the full RL setup.

\subsection{Results}
\label{sec:results}

Here we only discuss the representative results for the three major questions of interest.
Plots for detailed evaluations are available in Appendix F.% \ref{apx:results}.
%

% \paragraph{(a)} 
\textbf{(a)}
For the routing problem in San Francisco, as both the states and actions are discrete, the q-function for each state-action pair has a unique  parameter.
When no parameters are shared, SAS-Q-learning will not diverge.
Therefore, in this domain, we notice that SAS-Q-learning performs similarly to the proposed algorithms. 
However, in many large-scale problems, the use of function approximators is crucial for estimating the optimal policy.
For the robot locomotion task in the maze domain and the recommender system, the state space is not discrete and hence function approximators are required to obtain the state features.
As we saw in Section \ref{sec:limitations}, the sharing of state features can create problems for SAS-Q-learning.
The increased variance in the performance of SAS-Q-learning is visible in both the Maze and the Recommender system domains in Figure \ref{Fig:plots}.
While the SAS-Q eventually performs the same on the Maze domain, its performance improvement saturates quickly in the recommender system domain thus resulting in a sub-optimal policy.

\textbf{(b)}
To provide visual intuition for the behavior of adaptive variance mitigation, we report the values of $\lambda_1$ and $\lambda_2$ over the training duration in Figure \ref{Fig:plots}.
%
% Note that for different domains the pattern of $\alpha$ varies significantly.
%
As several factors are combined through \eqref{eqn:minvar} to influence the $\lambda$ values, it is hard to pinpoint any individual factor that is responsible for the observed trend.
However, note that for both the routing problem in San Francisco and the robot navigation in maze, the goal reward is obtained on reaching the destination and intermediate actions do not impact the total return significantly.
Intuitively, this makes the action set conditioned baseline $\bar q$ similarly correlated to the observed return as the state only conditioned baseline, $\hat v$, but at the expense of estimating significantly more number of parameters.
Thus the importance for $\bar q$ is automatically adapted to be closer to zero.
On the other hand, in recommender system, each product has a significant amount of associated reward. 
Therefore, the total return possible during each episode has a strong dependency on the available action set and thus the magnitude of weight for $\bar q$ is much larger than that for $v$.
%

% \paragraph{(c)}
\textbf{(c)}
To understand the impact of the probability of an action being available, we report the best performances for all the algorithms for different probability values in  Figure \ref{Fig:ablation}.
We notice that in the San Francisco routing domain, SAS-Q-learning has a slight edge over the proposed methods. 
This can be attributed to the fact that off-policy samples can be re-used without causing any divergence problems as state features are not shared.
For the maze and the recommender system tasks, where function approximators are necessary, the proposed methods significantly out-perform SAS-Q.
% -learning method.

\section{Conclusion}

Building upon the SAS-MDP framework of \citet{Boutilier2018PlanningAL}, we studied an under-addressed problem of dealing with MDPs with stochastic action sets.
We highlighted some of the limitations of the existing method and addressed them by generalizing policy gradient methods for SAS-MDPs.
Additionally, we introduced a novel baseline and an adaptive variance reduction technique unique to this setting.
Our approach has several benefits.
Not only does it generalize the theoretical properties of standard policy gradient methods, but it is also practically efficient and simple to implement.

\section{Acknowledgement}

The research was supported by and partially conducted at Adobe
Research. 
We are also immensely grateful to the three anonymous reviewers who shared their insights and feedback, specially to the second reviewer who helped improve the counter example. 

{\footnotesize
\bibliographystyle{aaai}
\bibliography{bibliography}
}
\appendix
\onecolumn

\setcounter{lemma}{0}
\setcounter{thm}{0}
\setcounter{cor}{0}
\setcounter{prop}{0}

\section*{\centering Reinforcement Learning When All Actions are Not \\
Always Available (Supplementary Material)}

\section{A: SAS Policy Gradient}
\label{apx:SASPG}
\begin{lemma}[SAS Policy Gradient]
For all $s \in \mathcal S$,
    \begin{equation}
    \label{apx:eq:SASPG}
       \frac{d}{d\theta} J(\theta) = \sum_{t=0}^{\infty} \sum_{s \in \mathcal S}\gamma^t \Pr(S_{t}=s|\theta)\sum_{\alpha \in 2^{\mathcal{B}}} \varphi(s, \alpha) \sum_{a \in \alpha} q^\theta(s,a) \frac{\partial \pi^\theta(s,\alpha, a)}{\partial \theta}.
    \end{equation}
\end{lemma}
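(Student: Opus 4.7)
The plan is to mimic the classical policy gradient derivation of Sutton et al.\ (2000), extending every intermediate step to marginalize over the stochastic action set $\alpha$ in addition to the chosen action $a$. First I would start from the SAS Bellman equation \eqref{eqn:SAS-bellman}, writing
$$v^\theta(s) = \sum_{\alpha \in 2^{\mathcal B}} \varphi(s,\alpha) \sum_{a \in \alpha} \pi^\theta(s,\alpha,a)\, q^\theta(s,a),$$
and expanding $q^\theta(s,a) = \mathcal R(s,a) + \gamma \sum_{s'} \mathcal P(s,a,s') v^\theta(s')$. Crucially, neither $\varphi$ nor $\mathcal R$ nor $\mathcal P$ depends on $\theta$; only $\pi^\theta$ (and hence $q^\theta$) does. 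This separation of $\theta$-dependence is what makes the product rule clean.

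Next I would differentiate both sides with respect to $\theta$ via the product rule, obtaining
$$\nabla v^\theta(s) = \sum_{\alpha} \varphi(s,\alpha)\sum_{a \in \alpha}\Bigl[ q^\theta(s,a)\,\nabla \pi^\theta(s,\alpha,a) + \pi^\theta(s,\alpha,a)\,\gamma \sum_{s'} \mathcal P(s,a,s')\,\nabla v^\theta(s') \Bigr].$$
The first bracketed term is the ``score-function'' contribution at the current state, while the second is a one-step recursion on $\nabla v^\theta$ weighted by the SAS-induced transition kernel $T^\pi(s,s') := \sum_\alpha \varphi(s,\alpha) \sum_{a \in \alpha} \pi^\theta(s,\alpha,a)\mathcal P(s,a,s')$.

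The core step is to unroll this recursion. Letting $p^\pi_t(s \to s')$ denote the $t$-step transition probability under $\pi^\theta$, repeated substitution yields
$$\nabla v^\theta(s) = \sum_{t=0}^{\infty} \gamma^t \sum_{s' \in \mathcal S} p^\pi_t(s \to s') \sum_{\alpha} \varphi(s',\alpha) \sum_{a \in \alpha} q^\theta(s',a)\,\nabla \pi^\theta(s',\alpha,a).$$
I would then integrate against $d_0$ using $J(\theta) = \sum_s d_0(s) v^\theta(s)$ and identify $\sum_s d_0(s)\, p^\pi_t(s \to s') = \Pr(S_t = s' \mid \theta)$. A relabeling $s' \to s$ delivers \eqref{apx:eq:SASPG}.

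The main obstacle is justifying the termwise differentiation of the infinite series produced by unrolling, and the associated interchange of limits. I would handle this via the uniform boundedness of rewards ($|R_t| \le R_\text{max}$) together with $\gamma \in [0,1)$, which makes the Neumann-style series for $v^\theta$ and $\nabla v^\theta$ absolutely and uniformly convergent; Assumption~\ref{apx:ass:1} then licenses differentiation under the sum. A minor bookkeeping subtlety, which I would address explicitly, is that $\varphi(s,\alpha)$ vanishes on $\alpha = \emptyset$ and that the inner sum is restricted to $a \in \alpha$; these constraints are preserved by every step of the recursion and do not interact with the differentiation.
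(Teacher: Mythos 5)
Your proposal is correct and follows essentially the same route as the paper's own proof: expand $v^\theta(s)$ via the SAS Bellman equation, apply the product rule (only $\pi^\theta$ depends on $\theta$), unroll the resulting recursion on $\nabla v^\theta$ to obtain the $\gamma^t$-discounted state-visitation terms, and then integrate against $d_0$ to recover $\nabla J(\theta)$. Your explicit justification of the termwise differentiation and limit interchange (via bounded rewards, $\gamma<1$, and Assumption~\ref{apx:ass:1}) is a point of rigor the paper leaves implicit, but it does not change the argument.
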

\begin{proof} 

\begin{align}
    \frac{\partial v^\theta(s)}{\partial \theta} =& \frac{\partial}{\partial \theta} \mathbb{E}\left [ \sum_{k=0}^\infty \gamma^k R_{t} \middle | S_t=s, \theta \right ]\\
    =& \frac{\partial}{\partial \theta} \sum_{\alpha \in 2^\mathcal{B}}\!\!\!\varphi(s, \alpha)\!\! \sum_{a \in \alpha _s} \Pr(A_t=a|S_t=s, \mathcal A_t=\alpha, \theta) \mathbb{E}\left [ \sum_{k=0}^\infty \gamma^k R_{t+k} \middle | S_t=s, A_t=a,\theta \right ]\\
    =&\frac{\partial}{\partial \theta} \sum_{\alpha \in 2^\mathcal{B}}\!\!\!\varphi(s, \alpha)\!\!\sum_{a \in \alpha} \pi^\theta(s,\alpha, a) q^\theta(s,a)\\
    =& \sum_{\alpha \in 2^\mathcal{B}}\!\!\!\varphi(s, \alpha)\!\!  \sum_{a \in \alpha} \left (\frac{\partial \pi^\theta(s,\alpha,a)}{\partial \theta} q^\theta(s,a) +  \pi^\theta(s,\alpha, a) \frac{\partial q^\theta(s,a)}{\partial \theta} \right)\\
    =& \sum_{\alpha \in 2^\mathcal{B}}\!\!\!\varphi(s, \alpha)\!\! \sum_{a \in \alpha} \frac{\partial \pi^\theta(s,\alpha, a)}{\partial \theta} q^\theta(s,a) \\ &+\sum_{\alpha \in 2^\mathcal{B}}\!\!\!\varphi(s, \alpha)\!\!\sum_{a \in \alpha} \pi^\theta(s,\alpha, a) \frac{\partial}{\partial \theta}  \sum_{s' \in \mathcal S} P(s,a,s') \left ( R(s,a) + \gamma v^\theta(s') \right ) \label{apx:eq:unroll}\\
    =&\sum_{\alpha \in 2^\mathcal{B}}\!\!\!\varphi(s, \alpha)\!\!\sum_{a \in \alpha}\frac{\partial \pi^\theta(s,\alpha ,a)}{\partial \theta} q^\theta(s,a) +  \gamma \sum_{s' \in \mathcal S}\Pr(S_{t+1}=s'|S_t=s,\theta)  \frac{\partial v^\theta(s')}{\partial \theta},
    \end{align}
    where \eqref{apx:eq:unroll} comes from unrolling the Bellman equation. 
    We started with the partial derivative of the value of a state, expanded the definition of the value of a state, and obtained an expression in terms of the partial derivative of the value of another state. 
    Now, we again expand $\partial v^\theta(s') / \partial \theta$ using the definition of the state-value function and the Bellman equation. 
    \begin{align}
    \frac{\partial v^\theta(s)}{\partial \theta} =&\sum_{\alpha \in 2^\mathcal{B}}\!\!\!\varphi(s, \alpha)\!\!\sum_{a \in \alpha}\frac{\partial \pi^\theta(s,\alpha ,a)}{\partial \theta} q^\theta(s,a) \\
    &+  \gamma \sum_{s' \in \mathcal S}\Pr(S_{t+1}=s'|S_t=s,\theta)  \frac{\partial}{\partial \theta} \left ( \sum_{\alpha' \in 2^\mathcal{B}}\!\!\!\varphi(s', \alpha')\!\! \sum_{a' \in \alpha'} \pi^\theta(s',\alpha', a') q^\theta(s',a') \right )
    \\
    =&\sum_{\alpha \in 2^\mathcal{B}}\!\!\!\varphi(s, \alpha)\!\!\sum_{a \in \alpha}\frac{\partial \pi^\theta(s,\alpha ,a)}{\partial \theta} q^\theta(s,a) \\
    &+  \gamma \sum_{s' \in \mathcal S}\Pr(S_{t+1}=s'|S_t=s,\theta) \!\!\!\!\sum_{\alpha' \in 2^\mathcal{B}}\!\!\!\varphi(s', \alpha')\!\! \left ( \sum_{a' \in \alpha'} \frac{\partial \pi^\theta(s',\alpha', a')}{\partial \theta} q^\theta(s',a') + \pi^\theta(s',\alpha', a')\frac{\partial q^\theta(s',a')}{\partial \theta}\right )
    \end{align}
    \begin{align}
    =&\sum_{\alpha \in 2^\mathcal{B}}\!\!\!\varphi(s, \alpha)\!\!\sum_{a \in \alpha}\frac{\partial \pi^\theta(s,\alpha ,a)}{\partial \theta} q^\theta(s,a)   \\
    &+ \gamma \sum_{s' \in \mathcal S}\Pr(S_{t+1}=s'|S_t=s,\theta) \!\!\!\!\sum_{\alpha' \in 2^\mathcal{B}}\!\!\!\varphi(s', \alpha')\!\! \sum_{a' \in \alpha'} \frac{\partial \pi^\theta(s',\alpha', a')}{\partial \theta} q^\theta(s',a') \\
    &+ \gamma \sum_{s' \in \mathcal S}\Pr(S_{t+1}=s'|S_t=s,\theta) \!\!\!\!\sum_{\alpha' \in 2^\mathcal{B}}\!\!\!\varphi(s', \alpha')\!\!  \sum_{a' \in \alpha'} \!\!\!\!\pi^\theta(s',\alpha', a')\frac{\partial }{\partial \theta} \left ( \sum_{s'' \in \mathcal S} P(s',a',s'') (R(s',a') + \gamma v^\theta(s''))\right )\\
    =&\sum_{\alpha \in 2^\mathcal{B}}\!\!\!\varphi(s, \alpha)\!\!\sum_{a \in \alpha}\frac{\partial \pi^\theta(s,\alpha ,a)}{\partial \theta} q^\theta(s,a)   \\
    &+ \gamma \sum_{s' \in \mathcal S}\Pr(S_{t+1}=s'|S_t=s,\theta) \!\!\!\!\sum_{\alpha' \in 2^\mathcal{B}}\!\!\!\varphi(s', \alpha')\!\! \sum_{a' \in \alpha'} \frac{\partial \pi^\theta(s',\alpha', a')}{\partial \theta} q^\theta(s',a') \\
    &+ \gamma \sum_{s' \in \mathcal S}\Pr(S_{t+1}=s'|S_t=s,\theta) \!\!\!\!\sum_{\alpha' \in 2^\mathcal{B}}\!\!\!\varphi(s', \alpha')\!\!  \sum_{a' \in \alpha'} \!\!\!\!\pi^\theta(s',\alpha', a') \sum_{s'' \in \mathcal S} P(s',a',s'') \gamma \frac{\partial v^\theta(s'')}{\partial \theta} \\
    =&\underbrace{\sum_{\alpha \in 2^\mathcal{B}}\!\!\!\varphi(s, \alpha)\!\!\sum_{a \in \alpha}\frac{\partial \pi^\theta(s,\alpha ,a)}{\partial \theta} q^\theta(s,a)}_{\text{first term}}   \\
    &+ \underbrace{\gamma \sum_{s' \in \mathcal S}\Pr(S_{t+1}=s'|S_t=s,\theta) \!\!\!\!\sum_{\alpha' \in 2^\mathcal{B}}\!\!\!\varphi(s', \alpha')\!\! \sum_{a' \in \alpha'} \frac{\partial \pi^\theta(s',\alpha', a')}{\partial \theta} q^\theta(s',a')}_{\text{second term}} \\
    &+ \gamma^2 \sum_{s'' \in \mathcal S} \Pr(S_{t+2}=s'' | S_t=s,\theta) \frac{\partial v^\theta(s'')}{\partial \theta}. \label{apx:eqn:end1}
\end{align}
Expanding $\partial v^\theta(s') / \partial \theta$ allowed us to write it in terms of the partial derivative of yet another state, $s''$. 
We could continue this process, ``unravelling'' the recurrence further. 
Each time that we expand the partial derivative of the value of a state with respect to the parameters, we get another term. 
The first two terms that we have obtained are marked above. 
If we were to unravel the expression more times, by expanding $\partial v^\theta(s'') / \partial \theta$ and then differentiating, we would obtain the subsequent third, fourth, etc., terms.
% %
% In each term the symbol used for the state and action does not matter, and we can write $x$ for the state and $a$ for the action (we also replace the final term with $\dotsc$ to denote that we could continue to unravel the expression)
% %

Finally, to get the desired result, we expand the start-state objective and take the derivative with respect to it,
\begin{align}
   \frac{d}{d\theta} J(\theta) &= \sum_{s \in \mathcal{S}} d_0(s)  \frac{\partial}{\partial \theta} v^\theta(s). \label{apx:eqn:end2}
\end{align}

Combining results from \eqref{apx:eqn:end1} and \eqref{apx:eqn:end2}, we index each term by $t$, with the first term being $t=0$, the second $t=1$, etc., which results in the expression:
    \begin{align}
        \frac{d}{d\theta} J(\theta)  &=\sum_{t=0}^{\infty} \sum_{s \in \mathcal S}\gamma^t \Pr(S_{t}=s|\theta)\sum_{\alpha \in 2^{\mathcal{B}}} \varphi(s, \alpha) \sum_{a \in \alpha} q^\theta(s,a) \frac{\partial \pi^\theta(s,\alpha, a)}{\partial \theta}.
    \end{align}
Notice that to get the gradient with respect to $J(\theta)$, we have included a sum over all the states weighted by, $d_0(s)$, the start state probability. 
When $t=0$, the only state where $\Pr(S_{0}=s|S_0=s,\theta)$ is not zero will be when $s=s$ (at which point this probability is one). 
This allows us to succinctly represent all the terms.
With this we conclude the proof.
\end{proof}

\section{B: Convergence}
%
% Let the updates to the policy parameters $\theta$ be given by,
%
% \begin{align}
%     \theta_{t+1} = \theta_t + \eta_\theta^t(\nabla J(\theta) + \mathcal N_t),   
% \end{align}
%
\label{apx:convergence}
\begin{lemma}
Under Assumptions \eqref{apx:ass:1}-\eqref{apx:ass:3}, SAS policy gradient algorithm causes $\nabla J(\theta_t) \to 0$ as $t \to \infty$, with probability one.
\end{lemma}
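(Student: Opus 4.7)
The plan is to recognize this as a standard stochastic gradient ascent convergence result and reduce it to a known theorem on stochastic approximation (e.g., Proposition 3 in Bertsekas and Tsitsiklis 2000, or the classical result of Robbins–Monro / Borkar). The three ingredients needed are (i) an unbiased estimator of $\nabla J(\theta)$, (ii) bounded second moment of that estimator, and (iii) a Lipschitz-smooth objective combined with a Robbins–Monro step-size schedule. Items (i) and (iii) are already in hand: unbiasedness is immediate from Lemma \ref{lemma:SASPG} (the update $\Delta$ used in Algorithm \ref{alg1} is a sample realization of the expectation in \eqref{eqn:SAS_pg}), while (iii) is Assumptions \ref{apx:ass:2} and \ref{apx:ass:3}.

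My first step would be to write the update as $\theta_{t+1} = \theta_t + \eta_\theta^t (\nabla J(\theta_t) + M_{t+1})$, where $M_{t+1}$ is a martingale difference noise with respect to the natural filtration $\mathcal F_t = \sigma(\theta_0,\ldots,\theta_t)$. Unbiasedness from Lemma \ref{lemma:SASPG} gives $\mathbb{E}[M_{t+1}\mid \mathcal F_t]=0$. Next I would verify the bounded second moment condition $\mathbb{E}[\|M_{t+1}\|^2\mid\mathcal F_t] \le K_1 + K_2 \|\nabla J(\theta_t)\|^2$ for some constants. Since $R_t \in [-R_\text{max}, R_\text{max}]$ and $\gamma \in [0,1)$, the return is bounded by $R_\text{max}/(1-\gamma)$; combined with Assumption \ref{apx:ass:1} and the standard softmax-type parameterizations mentioned in the paper (which yield bounded score functions $\psi^\theta$), each summand of the sample gradient estimator is uniformly bounded, so in fact $\mathbb{E}[\|M_{t+1}\|^2\mid\mathcal F_t]\le K$ uniformly.

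With these pieces, I would invoke the standard stochastic approximation theorem: under an $L$-Lipschitz gradient (Assumption \ref{apx:ass:2}), a martingale-difference noise with bounded conditional second moment, and step sizes satisfying $\sum_t \eta_\theta^t = \infty$, $\sum_t (\eta_\theta^t)^2 < \infty$ (Assumption \ref{apx:ass:3}), the iterates satisfy $\liminf_{t\to\infty}\|\nabla J(\theta_t)\|=0$ almost surely, and a standard refinement (using the descent lemma together with a supermartingale convergence argument applied to $J(\theta_t)$, which is bounded above because rewards are bounded) upgrades this to $\lim_{t\to\infty}\|\nabla J(\theta_t)\|=0$ a.s. The descent lemma yields $J(\theta_{t+1}) \ge J(\theta_t) + \eta_\theta^t \langle \nabla J(\theta_t), \nabla J(\theta_t)+M_{t+1}\rangle - \tfrac{L}{2}(\eta_\theta^t)^2\|\nabla J(\theta_t)+M_{t+1}\|^2$, and taking conditional expectations, summing, and applying Robbins–Siegmund gives both that $J(\theta_t)$ converges and that $\sum_t \eta_\theta^t \|\nabla J(\theta_t)\|^2 < \infty$ a.s.

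The main obstacle, and the only nontrivial piece of work, is establishing the bounded second moment of the gradient estimator in a way that is compatible with infinite-horizon sampling. In practice, Algorithm \ref{alg1} uses truncated trajectories, so I would either (a) assume episodic termination or a maximum trajectory length $T$, as in the pseudocode, in which case boundedness of $\|\psi^\theta\|$ and of returns immediately gives a uniform bound on $\|M_{t+1}\|^2$; or (b) for the infinite-horizon discounted case, use $|G_t|\le R_\text{max}/(1-\gamma)$ and Assumption \ref{apx:ass:1} plus the specific parameterization to bound $\|\psi^\theta(s,\alpha,a)\|$. Once this technical bound is in place, the convergence conclusion follows directly from the cited stochastic approximation result, and the detailed argument parallels the standard MDP policy gradient convergence proof (e.g., Sutton et al. 2000; Bhatnagar et al. 2009), with the SAS-specific content entirely contained in Lemma \ref{lemma:SASPG}.
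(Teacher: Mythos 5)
Your proposal is correct and follows essentially the same route as the paper: both reduce the claim to the stochastic gradient convergence theorem of Bertsekas and Tsitsiklis (2000) under Assumptions A1--A3, and both rule out the alternative $J(\theta_t)\to\infty$ by noting that bounded rewards and $\gamma<1$ give $J(\theta)\le R_\text{max}/(1-\gamma)$. The only difference is that you spell out the ingredients the paper leaves implicit (martingale-difference noise via Lemma \ref{lemma:SASPG}, bounded second moment of the estimator, Robbins--Siegmund refinement), which is a welcome elaboration rather than a different argument.
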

\begin{proof}
    Following the standard result on convergence of gradient ascent (descent) methods \citep{bertsekas2000gradient}, we know that under Assumptions \eqref{apx:ass:1}-\eqref{apx:ass:3}, either $J(\theta) \to \infty$ or $\nabla J(\theta) \to 0$ as $t \to \infty$.
    However, maximum rewards possible is $R_\text{max}$ and $\gamma < 1$, therefore $J(\theta)$ is bounded above by $R_\text{max}/(1 - \gamma)$.
    Hence $J(\theta)$ cannot go to $\infty$ and we get the desired result.
\end{proof}

\section{C: SAS Natural Policy Gradient}
\label{apx:sasnpg}
\begin{prop}[Fisher Information Matrix]
\label{apx:prop:fim}
For a policy, parameterized using weights $\theta$, let $\psi^\theta(s, \alpha, a) \coloneqq \partial \log \pi^\theta (s, \alpha, a)/\partial \theta$, then the Fisher information matrix is,
\begin{align}
    F_\theta = \sum_{t=0}^{\infty} \sum_{s \in \mathcal S}\gamma^t \Pr(S_{t}=s|\theta)\!\!\sum_{\alpha \in 2^{\mathcal{B}}} \!\!\!\varphi(s, \alpha) \sum_{a \in \alpha} \pi^\theta(s,\alpha, a)\psi(s, \alpha, a) \psi(s, \alpha, a)^\top.
\end{align}
\end{prop}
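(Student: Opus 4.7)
The plan is to follow the trajectory-based derivation of Bagnell and Schneider, treating $\pi^\theta$ as inducing a probability measure over trajectories $\tau = (S_0, \mathcal A_0, A_0, S_1, \mathcal A_1, A_1, \ldots)$ and then applying the textbook identity $F_\theta = \mathbb{E}_\tau[\nabla_\theta \log p_\theta(\tau)\,\nabla_\theta \log p_\theta(\tau)^\top]$. To recover the $\gamma^t$ weighting that appears in the statement, I would work with the standard discounted-horizon construction in which each step continues with probability $\gamma$ and otherwise terminates; this makes the probability that the (uncensored) trajectory reaches $(s, \alpha, a)$ at step $t$ proportional to $\gamma^t \Pr(S_t = s \mid \theta)\,\varphi(s,\alpha)\,\pi^\theta(s,\alpha,a)$, which is exactly what appears in the claimed formula.

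First, I would factor the trajectory likelihood as
\begin{equation}
p_\theta(\tau) = d_0(S_0) \prod_{t \ge 0} \varphi(S_t, \mathcal A_t)\, \pi^\theta(S_t, \mathcal A_t, A_t)\, \mathcal P(S_t, A_t, S_{t+1}),
\end{equation}
and observe that the only $\theta$-dependent factors are the policy terms, so
\begin{equation}
\nabla_\theta \log p_\theta(\tau) = \sum_{t \ge 0} \psi^\theta(S_t, \mathcal A_t, A_t).
\end{equation}
Substituting into the definition of $F_\theta$ yields a double sum $\sum_t \sum_{t'} \mathbb{E}[\psi_t \psi_{t'}^\top]$ that I would split into diagonal ($t=t'$) and off-diagonal ($t \ne t'$) contributions.

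The key step is to show the off-diagonal terms vanish. For $t' > t$, I would condition on the history through $(S_{t'}, \mathcal A_{t'})$ and use the ``score identity'' inside the inner expectation:
\begin{equation}
\mathbb{E}\!\left[\psi^\theta(S_{t'}, \mathcal A_{t'}, A_{t'}) \,\Big|\, S_{t'}, \mathcal A_{t'}\right] = \sum_{a \in \mathcal A_{t'}} \pi^\theta(S_{t'}, \mathcal A_{t'}, a)\, \frac{\nabla \pi^\theta(S_{t'}, \mathcal A_{t'}, a)}{\pi^\theta(S_{t'}, \mathcal A_{t'}, a)} = \nabla \!\!\!\sum_{a \in \mathcal A_{t'}}\!\!\! \pi^\theta(S_{t'}, \mathcal A_{t'}, a) = 0,
\end{equation}
since the policy sums to one over the sampled action set $\mathcal A_{t'}$. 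The symmetric argument handles $t' < t$, so only diagonal terms survive.

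What remains is to expand $\sum_t \mathbb{E}[\psi^\theta(S_t,\mathcal A_t,A_t)\psi^\theta(S_t,\mathcal A_t,A_t)^\top]$ by marginalizing step-$t$ over $S_t$, $\mathcal A_t$, and $A_t$ using $\Pr(S_t=s\mid\theta)$, $\varphi(s,\alpha)$, and $\pi^\theta(s,\alpha,a)$ respectively; together with the $\gamma^t$ factor absorbed from the discounted-trajectory construction, this reproduces the displayed expression for $F_\theta$. The main obstacle I anticipate is being careful about the $\gamma^t$ bookkeeping—one must either invoke the discounted-termination trajectory measure at the outset or, equivalently, adopt the Kakade (2002) convention that defines $F_\theta$ with respect to the discounted state-occupancy measure; everything else is a bookkeeping expansion of the trajectory expectation.
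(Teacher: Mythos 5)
Your proposal is correct, but it follows a genuinely different route from the paper's. The paper invokes Amari's identity $\mathbb{E}\bigl[\partial^2 \log \Pr(X)/\partial \theta^2\bigr] = -\mathbb{E}\bigl[\nabla_\theta \log \Pr(X)\,\nabla_\theta \log \Pr(X)^\top\bigr]$ and works with the \emph{Hessian} form of the FIM: after factoring the trajectory likelihood exactly as you do, the second derivative of the log-likelihood is simply $\sum_t \partial^2 \log \pi^\theta(s_t,\alpha_t,a_t)/\partial\theta^2$, so no cross terms between different time steps ever appear; the paper then marginalizes each term over $(s_t,\alpha_t,a_t)$ and converts each per-state Hessian back to an outer product using the same identity applied state-wise, $\sum_{a\in\alpha}\pi^\theta\,\partial^2\log\pi^\theta/\partial\theta^2 = -\sum_{a\in\alpha}\pi^\theta\,\psi\psi^\top$. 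You instead stay with the score-covariance (outer-product) definition throughout, which forces you to confront the double sum $\sum_t\sum_{t'}\mathbb{E}[\psi_t\psi_{t'}^\top]$ and kill the off-diagonal terms by conditioning on $(S_{t'},\mathcal A_{t'})$ and using $\sum_{a\in\alpha}\pi^\theta(s,\alpha,a)=1$ over the \emph{sampled} action set --- the same normalization fact the paper uses elsewhere (in its baseline-unbiasedness argument), and your use of it here is valid. Each approach buys something: the paper's Hessian route avoids the cross-term analysis entirely but needs the two-way conversion through second derivatives, while your route is more elementary (first derivatives only) at the cost of the cancellation argument. On the $\gamma^t$ bookkeeping your treatment is actually the more careful one: the paper simply states that the marginalized terms are ``discounted appropriately with $\gamma$'' without justifying where the discount enters the trajectory measure, whereas you make the convention explicit via the discounted-termination construction (equivalently, defining $F_\theta$ with respect to the discounted occupancy measure, as in Kakade's formulation), which is precisely the assumption needed to make either proof rigorous.
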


\begin{proof}
To prove this result, we first note the following relation by \citet{amari2007methods} which connects the Hessian and the FIM of a random variable $X$ parameterized using $\theta$,
\begin{align}
    \mathbb{E}\left[\frac{\partial^2 \log \Pr(X)}{\partial \theta^2} \right]  = - \mathbb{E} \left [\frac{\partial \log \Pr(X)}{\partial \theta} \frac{\partial \log \Pr(X)}{\partial \theta}^\top \right ]. \label{apx:hess-fisher}
\end{align}
Now, let $\mathscr T_\theta$ denote the random variable corresponding to the \textit{trajectories} observed using policy $\pi^\theta$.
Let $\tau = (s_0, \alpha_0, a_0, s_1, \alpha_1, a_1, ...)$ denote an outcome of $\mathscr T_\theta$, then the probability of observing this trajectory, $\tau$, is given by,
\begin{align}
    \Pr(\mathscr T_\theta = \tau) &= \Pr(s_0)\prod_{t=0}^\infty \Pr(\alpha_t|s_t) \Pr(a_t | s_t, \alpha_t) \Pr(s_{t+1}|s_t, a_t)
    \\
    &= d_0(s_0)\prod_{t=0}^\infty \varphi(s_t, \alpha_t) \pi^\theta(s_t, \alpha_t, a_t)  P(s_t, a_t,s_{t+1}).
\end{align}
Therefore,
\begin{align}
    \frac{\partial^2}{\partial \theta^2} \log \Pr(\mathscr T_\theta = \tau) &= \frac{\partial^2}{\partial \theta^2}\log \left(  d_0(s_0)\prod_{t=0}^\infty \varphi(s_t, \alpha_t) \pi^\theta(s_t, \alpha_t, a_t) P(s_t, a_t,s_{t+1}) \right)
    \\
    &= \frac{\partial^2}{\partial \theta^2} \left( \log d_0(s_0) + \sum_{t=0}^\infty \log\varphi(s_t, \alpha_t)  + \sum_{t=0}^\infty \log\pi^\theta(s_t, \alpha_t, a_t) + \sum_{t=0}^\infty \log P(s_t, a_t,s_{t+1}) \right)
    \\
    &= \sum_{t=0}^\infty \frac{\partial^2}{\partial \theta^2} \log\pi^\theta(s_t, \alpha_t, a_t). \label{apx:eqn:trajectory-policy}
\end{align}
We know that Fisher Information Matrix for a random variable, which in our case is $\mathscr T_\theta$, is given by,
\begin{align}
    F_\theta &= \mathbb{E} \left[\frac{\partial \log \Pr(\mathscr T_\theta)}{\partial \theta} \frac{\partial \log \Pr(\mathscr T_\theta)}{\partial \theta} ^\top \right]
    \\
    &= - \mathbb{E} \left[\frac{\partial^2 \log \Pr(\mathscr T_\theta)}{\partial \theta^2}  \right] & \text{(Using Equation \eqref{apx:hess-fisher})}
    \\
    &= - \mathbb{E} \left[\sum_{t=0}^\infty \frac{\partial^2}{\partial \theta^2} \log\pi^\theta(s_t, \alpha_t, a_t) \right]   & \text{(Using Equation \eqref{apx:eqn:trajectory-policy})}
\\
    &= - \sum_{\tau \in \mathscr T_\theta} \Pr(\mathscr T_\theta = \tau)\sum_{t=0}^\infty \frac{\partial^2}{\partial \theta^2} \log\pi^\theta(s_t, \alpha_t, a_t)  , \label{apx:eqn:traj1}
\end{align}
where the summation over $\mathscr T_\theta$ corresponds to all possible values of $s, \alpha$ and $a$ for every step $t$ in the trajectory.
Expanding the inner summation in \eqref{apx:eqn:traj1},
\begin{align}
    F_\theta &= - \sum_{\tau \in \mathscr T_\theta} \Pr(\mathscr T_\theta = \tau) \frac{\partial^2}{\partial \theta^2} \log\pi^\theta(s_0, \alpha_0, a_0) - \sum_{\mathscr T_\theta} \Pr(\mathscr T_\theta) \frac{\partial^2}{\partial \theta^2} \log\pi^\theta(s_1, \alpha_1, a_1) - ...  \label{apx:eqn:traj2}   
\end{align}
Note that the summation in \eqref{apx:eqn:traj2} over all possible trajectories, i.e. all possible values of $s, \alpha$ and $a$ for every step $t$, marginalizes out the terms not associated with respective $\log \pi^\theta$ terms, i.e.,
\begin{align}
    F_\theta =& - \sum_{s_0 \in \mathcal S} \Pr(S_0=s_0|\theta)\!\!\sum_{\alpha_0 \in 2^{\mathcal{B}}} \!\!\!\varphi(s_0, \alpha_0) \sum_{a_0 \in \alpha_0} \pi^\theta(s_0,\alpha_0, a_0) \frac{\partial^2}{\partial \theta^2} \log\pi^\theta(s_0, \alpha_0, a_0) 
    \\
    &- \sum_{s_1 \in \mathcal S} \Pr(S_1=s_1|\theta)\!\!\sum_{\alpha_1 \in 2^{\mathcal{B}}} \!\!\!\varphi(s_1, \alpha_1) \sum_{a_1 \in \alpha_1} \pi^\theta(s_1,\alpha_1, a_1) \frac{\partial^2}{\partial \theta^2} \log\pi^\theta(s_1, \alpha_1, a_1)
    \\
    &- ...  \label{apx:eqn:traj3}  
\end{align}
Combining all the terms in \eqref{apx:eqn:traj3} and discounting them appropriately with $\gamma$, we get,
\begin{align}
    F_\theta = - \sum_{t=0}^{\infty} \sum_{s \in \mathcal S}\gamma^t \Pr(S_{t}=s|\theta)\!\!\sum_{\alpha \in 2^{\mathcal{B}}} \!\!\!\varphi(s, \alpha) \sum_{a \in \alpha} \pi^\theta(s,\alpha, a) \frac{\partial^2}{\partial \theta^2} \log\pi^\theta(s, \alpha, a). \label{apx:eqn:qw}
\end{align}
Finally, note that using \eqref{apx:hess-fisher},
\begin{align}
    \sum_{a \in \alpha} \pi^\theta(s,\alpha, a) \frac{\partial^2}{\partial \theta^2} \log\pi^\theta(s, \alpha, a) = - \sum_{a \in \alpha} \pi^\theta(s,\alpha, a) \psi(s, \alpha, a)\psi(s, \alpha, a)^\top. \label{apx:eqn:qwwq}
\end{align}
Combining \eqref{apx:eqn:qw} and \eqref{apx:eqn:qwwq} we get,
\begin{align}
     F_\theta = \sum_{t=0}^{\infty} \sum_{s \in \mathcal S}\gamma^t \Pr(S_{t}=s|\theta)\!\!\sum_{\alpha \in 2^{\mathcal{B}}} \!\!\!\varphi(s, \alpha) \sum_{a \in \alpha} \pi^\theta(s,\alpha, a) \psi(s, \alpha, a)\psi(s, \alpha, a)^\top.
\end{align}
With this we conclude the proof.
\end{proof}

\begin{lemma}[SAS Natural Policy Gradient]
Let $w$ be a parameter such that,
\begin{align}
   \frac{\partial }{\partial w} \mathbb{E} \left [\frac{1}{2} \sum_t^\infty \gamma^t \left(\psi(S_t, \mathcal A_t, A_t)^\top w - q^\theta(S_t, A_t)\right)^2 \right] = 0, \label{apx:eqn:fisher_grad}
\end{align}
then for all $s \in \mathcal S$ in $\mathcal M'$, 
\begin{align}
    \widetilde \nabla J(\theta) = w.
\end{align}
\end{lemma}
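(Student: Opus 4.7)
The plan is to follow the classical Kakade (2002) argument, adapted to the SAS-MDP setting using the Fisher information matrix from Proposition \ref{apx:prop:fim} and the policy gradient expression from Lemma \ref{apx:eq:SASPG}. The overall idea: interpret the first-order condition in \eqref{apx:eqn:fisher_grad} as a linear system $F_\theta w = \nabla J(\theta)$, so that $w = F_\theta^{-1} \nabla J(\theta) = \widetilde\nabla J(\theta)$ by definition.

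First I would differentiate the quadratic inside the expectation in \eqref{apx:eqn:fisher_grad} with respect to $w$ and set the result to zero, obtaining
\begin{align}
\mathbb{E}\!\left[\sum_{t=0}^\infty \gamma^t \psi^\theta(S_t,\mathcal A_t,A_t)\bigl(\psi^\theta(S_t,\mathcal A_t,A_t)^\top w - q^\theta(S_t,A_t)\bigr)\right] = 0,
\end{align}
which I would rearrange into
\begin{align}
\underbrace{\mathbb{E}\!\left[\sum_{t=0}^\infty \gamma^t \psi^\theta(S_t,\mathcal A_t,A_t)\psi^\theta(S_t,\mathcal A_t,A_t)^\top\right]}_{=: M_\theta}\, w \;=\; \underbrace{\mathbb{E}\!\left[\sum_{t=0}^\infty \gamma^t \psi^\theta(S_t,\mathcal A_t,A_t)\, q^\theta(S_t,A_t)\right]}_{=: g_\theta}.
\end{align}

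Next I would identify $M_\theta$ with the Fisher information matrix $F_\theta$. Expanding the expectation as a sum over $s$, $\alpha$, $a$, weighted by $\gamma^t \Pr(S_t = s\mid\theta)\,\varphi(s,\alpha)\,\pi^\theta(s,\alpha,a)$, I recover exactly the expression in Proposition \ref{apx:prop:fim}. Similarly I would identify $g_\theta$ with $\nabla J(\theta)$: using $\psi^\theta(s,\alpha,a)\pi^\theta(s,\alpha,a) = \partial \pi^\theta(s,\alpha,a)/\partial\theta$, the per-timestep inner term becomes $\sum_{a\in\alpha} (\partial \pi^\theta(s,\alpha,a)/\partial\theta)\,q^\theta(s,a)$, and summing against the same weighting recovers the SAS policy gradient expression of Lemma \ref{apx:eq:SASPG}.

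The two identifications give $F_\theta w = \nabla J(\theta)$, so $w = F_\theta^{-1}\nabla J(\theta) = \widetilde\nabla J(\theta)$, as claimed. The only mildly delicate step is the algebraic bookkeeping in the second identification — absorbing the $\pi^\theta(s,\alpha,a)$ factor from the trajectory distribution into $\psi^\theta$ to turn a $\log$-derivative back into a plain derivative of $\pi^\theta$ — but this is routine and mirrors the standard REINFORCE-style manipulation, now carried out over the extra summation $\sum_{\alpha\in 2^{\mathcal B}}\varphi(s,\alpha)$ introduced by the SAS formulation. I would also implicitly assume that $F_\theta$ is invertible so that the natural gradient is well-defined, which is the standard assumption underlying the definition $\widetilde\nabla J(\theta) \coloneqq F_\theta^{-1}\nabla J(\theta)$.
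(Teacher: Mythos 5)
Your proposal is correct and follows essentially the same route as the paper: the paper likewise expands the first-order condition into $\mathbb{E}[\sum_t \gamma^t \psi\psi^\top]w = \mathbb{E}[\sum_t \gamma^t \psi\, q^\theta]$, identifies the right-hand side with the SAS policy gradient (via $\pi^\theta \psi^\theta = \partial\pi^\theta/\partial\theta$) and the left-hand matrix with $F_\theta$ from the Fisher information proposition, and concludes $\widetilde\nabla J(\theta) = F_\theta^{-1}F_\theta w = w$. The only presentational difference is that the paper substitutes these identities into the definition $\widetilde\nabla J(\theta) = F_\theta^{-1}\nabla J(\theta)$ rather than writing the normal equations $F_\theta w = \nabla J(\theta)$ first, and it leaves the invertibility of $F_\theta$ implicit just as you note.
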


\begin{proof}
We begin by expanding \eqref{apx:eqn:fisher_grad},
\begin{align}
    \mathbb{E} \left [\sum_t^\infty \gamma^t \left(\psi(S_t, \mathcal A_t, A_t)^\top w - q^\theta(S_t, A_t)\right)\psi(S_t, \mathcal A_t, A_t) \right] &= 0
    \\
    \mathbb{E} \left [\sum_t^\infty \gamma^t \psi(S_t, \mathcal A_t, A_t) \psi(S_t, \mathcal A_t, A_t)^\top w  \right] &= \mathbb{E} \left [\sum_t^\infty \gamma^t \psi(S_t, \mathcal A_t, A_t) q^\theta(S_t, A_t)  \right]. \label{apx:eqn:fish1}
\end{align}
\begin{align}
   \widetilde \nabla J(\theta) &\coloneqq F^{-1}_\theta 
    \frac{\partial}{\partial \theta}J(\theta)
    \\
    &= F^{-1}_\theta \sum_{t=0}^{\infty} \sum_{s \in \mathcal S}\gamma^t \Pr(S_{t}=s|\theta)\sum_{\alpha \in 2^{\mathcal{B}}} \varphi(s, \alpha) \sum_{a \in \alpha} q^\theta(s,a) \frac{\partial \pi^\theta(s,\alpha, a)}{\partial \theta}
    \\
    &= F^{-1}_\theta \sum_{t=0}^{\infty} \sum_{s \in \mathcal S}\gamma^t \Pr(S_{t}=s|\theta)\sum_{\alpha \in 2^{\mathcal{B}}} \varphi(s, \alpha) \sum_{a \in \alpha} \pi^\theta(s,\alpha, a)  \psi^\theta (s,\alpha, a)q^\theta(s,a).  \label{apx:eqn:fish2}
\end{align}
Now combining \eqref{apx:eqn:fish1} and \eqref{apx:eqn:fish2},
\begin{align}
    \widetilde \nabla J(\theta) &= F^{-1}_\theta \sum_{t=0}^{\infty} \sum_{s \in \mathcal S}\gamma^t \Pr(S_{t}=s|\theta)\sum_{\alpha \in 2^{\mathcal{B}}} \varphi(s, \alpha) \sum_{a \in \alpha} \pi^\theta(s,\alpha, a)  \psi^\theta (s,\alpha, a)\psi^\theta (s,\alpha, a)^\top w
    \\
    &=  F^{-1}_\theta  F_\theta w
    \\
    &= w,
\end{align}
where the second last step follows from Property \ref{apx:prop:fim}. 
With this we conclude the proof.
\end{proof}

\section{D: Adaptive Variance Mitigation}
\label{apx:variance}
\begin{prop}[Unbiased estimator]
Let $\hat J(s, \alpha, a, \theta) \coloneqq \psi^\theta(s,\alpha, a)\left(q^\theta(s, a) + \lambda_1 \hat v(s) + \lambda_2 \bar q(s, \alpha) \right)$ and  $d^\pi(s) \coloneqq (1-\gamma)\sum_t^\infty \gamma^t \Pr(S_t=s)$, then for any values of $\lambda_1 \in \mathbb{R}$ and $\lambda_2 \in \mathbb{R}$,
\begin{align}
    %  \label{eqn:unbiased-estimate}\\
% \end{align}
% \textcolor{red}{Fix: $\hat J$ should be an estimate of $J$, not an estimate of one piece of $J$.}
%
%
% \begin{align}
    \nabla J(\theta) &= \mathbb{E}\left[ \hat J(s, \alpha, a, \theta) \middle| d^\pi, \varphi, \pi \right].
    %= \mathbb{E}\left[\psi^\theta(s,\alpha, a)q^\theta(s,a) \middle)\right].
\end{align}
\end{prop}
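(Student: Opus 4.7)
The plan is to reduce the claim to showing that each of the two baseline terms contributes zero in expectation, after which Lemma~1 immediately gives the desired identity (up to the usual $(1-\gamma)$ normalization implicit in $d^\pi$). Concretely, I would first invoke Lemma~\ref{lemma:SASPG} rewritten in the $\psi^\theta$ form by using the identity $\partial \pi^\theta/\partial \theta = \pi^\theta \psi^\theta$, which yields
\begin{align}
\nabla J(\theta) \;=\; \frac{1}{1-\gamma}\,\mathbb{E}\!\left[\psi^\theta(s,\alpha,a)\,q^\theta(s,a)\,\middle|\,d^\pi,\varphi,\pi\right].
\end{align}
Thus the property will follow once I establish that the extra two terms $\lambda_1\,\mathbb{E}[\psi^\theta(s,\alpha,a)\hat v(s)]$ and $\lambda_2\,\mathbb{E}[\psi^\theta(s,\alpha,a)\bar q(s,\alpha)]$ both vanish.

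The key lemma I would prove as a one-line calculation is that for any function $b(s,\alpha)$ that does not depend on the action $a$,
\begin{align}
\sum_{a\in\alpha}\pi^\theta(s,\alpha,a)\,\psi^\theta(s,\alpha,a)\,b(s,\alpha)
\;=\; b(s,\alpha)\,\frac{\partial}{\partial\theta}\sum_{a\in\alpha}\pi^\theta(s,\alpha,a)
\;=\; b(s,\alpha)\,\frac{\partial}{\partial\theta}(1)
\;=\; 0,
\end{align}
using $\pi^\theta\psi^\theta = \partial\pi^\theta/\partial\theta$ and the fact that $\pi^\theta(s,\alpha,\cdot)$ sums to one over $a\in\alpha$. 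Taking the expectation of $\psi^\theta(s,\alpha,a)\hat v(s)$ over $a \sim \pi^\theta(s,\alpha,\cdot)$ and applying this with $b(s,\alpha):=\hat v(s)$ kills the first baseline term. For the second, the crucial observation is that although $\bar q(s,\alpha) = \sum_{a\in\alpha}\pi^\theta(s,\alpha,a)\hat q(s,a)$ depends on $\alpha$, it has already been marginalized over $a$ and so is action-independent once we condition on $(s,\alpha)$; applying the same lemma with $b(s,\alpha):=\bar q(s,\alpha)$ disposes of it.

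Putting these pieces together inside the nested expectation over the state-occupancy measure $d^\pi$ and the action-set distribution $\varphi$, and then using tower-property/iterated expectation by conditioning on $(s,\alpha)$ before summing over $a$, gives
\begin{align}
\mathbb{E}\!\left[\hat J(s,\alpha,a,\theta)\,\middle|\,d^\pi,\varphi,\pi\right]
\;=\; \mathbb{E}\!\left[\psi^\theta(s,\alpha,a)\,q^\theta(s,a)\,\middle|\,d^\pi,\varphi,\pi\right],
\end{align}
which matches $\nabla J(\theta)$ by Lemma~\ref{lemma:SASPG}. I expect no serious obstacle here; the only subtle point that deserves emphasis in the write-up is why $\bar q(s,\alpha)$ is legitimately treated as action-independent inside the inner sum over $a\in\alpha$---this is precisely the design choice that makes the action-set-conditioned baseline compatible with unbiasedness, unlike a genuine action-dependent baseline $\hat q(s,a)$, which would leave a non-vanishing residual $\sum_a \pi^\theta \psi^\theta \hat q(s,a)\neq 0$ in general.
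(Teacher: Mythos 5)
Your proposal is correct and follows essentially the same route as the paper's proof: split $\mathbb{E}[\hat J]$ into the $q^\theta$ term plus the baseline terms, pull $\lambda_1\hat v(s)+\lambda_2\bar q(s,\alpha)$ out of the inner sum over $a\in\alpha$ (since both are action-independent given $(s,\alpha)$), and kill that sum via $\sum_{a\in\alpha}\pi^\theta\psi^\theta = \frac{\partial}{\partial\theta}\sum_{a\in\alpha}\pi^\theta(s,\alpha,a) = \frac{\partial}{\partial\theta}1 = 0$, leaving the SAS policy gradient expression from Lemma~1. Your explicit remark on why $\bar q(s,\alpha)$ may be treated as a constant inside the sum over $a$, and your note about the $(1-\gamma)$ normalization hidden in $d^\pi$, are points the paper's own write-up leaves implicit, but the argument is the same.
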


\begin{proof}
We begin by expanding $\nabla J(\theta)$,
\begin{align}
 \mathbb{E}\left[ \hat J(s, \alpha, a, \theta) \middle| d^\pi, \varphi, \pi \right] &=  \mathbb{E}\left[\psi^\theta(s,\alpha, a)\Big(q^\theta(s, a)\Big)\right] + \mathbb{E}\left[\psi^\theta(s,\alpha, a)\Big(\lambda_1 \hat v(s) +\lambda_2  \bar q(s, \alpha)\Big)\right]. \label{apx:eqn:unbiased-estmator}
    % \frac{d}{d\theta} J(\theta) &\equiv \sum_{t=0}^\infty \gamma^t \left( \mathbb{E}\left[\psi^\theta(s,\alpha, a)\middle(\alpha_1 \hat v(s) +\lambda_2  \bar q(s, \alpha)\middle)\right] + \mathbb{E}\left[\psi^\theta(s,\alpha, a)\middle(\alpha_1 \hat v(s) +\lambda_2  \bar q(s, \alpha)\middle)\right] \right).
    %
 \end{align}
 Now consider the term associated with the baselines $\hat v(s)$ and $\bar q$,
 \begin{align}
    &\mathbb{E}\left[\psi^\theta(s,\alpha, a)\Big(\lambda_1 \hat v(s) +\lambda_2  \bar q(s, \alpha)\Big)\right] \\
    &= \sum_{\alpha \in 2^\mathcal{B}, s \in \mathcal S} \Pr(s, \alpha) \sum_{a \in \alpha} \pi^\theta(s, \alpha, a) \frac{\partial \ln\pi^\theta(s, \alpha, a)}{\partial \theta}\Big(\lambda_1 \hat v(s) +\lambda_2  \bar q(s, \alpha)\Big) \\
    &= \sum_{\alpha \in 2^\mathcal{B}, s \in \mathcal S} \Pr(s, \alpha) \Big(\lambda_1 \hat v(s) +\lambda_2  \bar q(s, \alpha)\Big)\sum_{a \in \alpha} \pi^\theta(s, \alpha, a) \frac{\partial \ln\pi^\theta(s, \alpha, a)}{\partial \theta}. \label{apx:eqn:unb1}
    %    %
\end{align}
Focusing only on the right part of \eqref{apx:eqn:unb1},
\begin{align}
    \sum_{a \in \alpha} \pi^\theta(s, \alpha, a) \frac{\partial \ln\pi^\theta(s, \alpha, a)}{\partial \theta} &= \sum_{a \in \alpha} \pi^\theta(s, \alpha, a) \frac{1}{\pi^\theta(s, \alpha, a)} \frac{\partial \pi^\theta(s, \alpha, a)}{\partial \theta}\\
    &= \sum_{a \in \alpha} \frac{\partial \pi^\theta(s, \alpha, a)}{\partial \theta}\\
    &= \frac{\partial}{\partial \theta}\sum_{a \in \alpha}\pi^\theta(s, \alpha, a)\\
    &= \frac{\partial}{\partial \theta}\mathbf{1}  
    \\
    &= 0. \label{apx:eqn:unb2}
\end{align}
Combining \eqref{apx:eqn:unb1} and \eqref{apx:eqn:unb2}, we observe that the bias of this new baseline combination is zero and we get the desired result.
\end{proof}

\begin{lemma}[Adaptive variance mitigation]
Let
\begin{align}
    \mathbf{A} &= \lbrack \lambda_1, \lambda_2 \rbrack^\top,
    \\
    \mathbf{B} &= \lbrack  \psi^\theta(s,\alpha, a) \hat v(s) , \psi^\theta(s,\alpha, a) \bar q(s, \alpha)\rbrack^\top ,
    \\
    \mathbf{C} &= \lbrack \psi^\theta(s,\alpha, a) q^\theta(s, a) \rbrack^\top,
\end{align}
such that, $\mathbf{A} \in \mathbb{R}^{2\times 1}, \mathbf{B} \in \mathbb{R}^{d\times 2}$ and $\mathbf{C} \in \mathbb{R}^{d\times 1}$, then the $\mathbf{A}$ that minimizes variance of $\hat J$ is given by,
\begin{align}
    \mathbf{A} = - \left( \mathbb{E}\middle[\mathbf{B}^\top\mathbf{B}\middle] \right)^{-1} \mathbb{E} \left[ \mathbf{B}^\top\mathbf{C}  \right].
\end{align}
\end{lemma}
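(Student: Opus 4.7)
The plan is to reduce the statement to an elementary quadratic minimization problem in $\mathbf{A} \in \mathbb{R}^{2}$. First I would observe that, by construction, the estimator $\hat{J}(s,\alpha,a,\theta)$ is a $d$-dimensional vector that can be rewritten compactly as $\hat{J} = \mathbf{C} + \mathbf{B}\mathbf{A}$, where $\mathbf{B}$ is the $d \times 2$ matrix whose columns are $\psi^\theta \hat v$ and $\psi^\theta \bar q$, and $\mathbf{C}$ is the $d \times 1$ vector $\psi^\theta q^\theta$. Because $\hat{J}$ is vector-valued, I would make precise that ``variance of $\hat{J}$'' refers to the trace of its covariance matrix, i.e.\ $\sum_{i=1}^{d}\mathrm{var}(\hat J_i) = \mathbb{E}[\|\hat{J}\|^2] - \|\mathbb{E}[\hat J]\|^2$.

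Next I would invoke Property \ref{prop:unbiased}, which guarantees that $\mathbb{E}[\hat J] = \nabla J(\theta)$ regardless of the choice of $\mathbf{A}$. Consequently the term $\|\mathbb{E}[\hat J]\|^2$ is a constant in $\mathbf{A}$, and minimizing the total variance is equivalent to minimizing $\mathbb{E}[\|\hat J\|^2] = \mathbb{E}[\|\mathbf{C} + \mathbf{B}\mathbf{A}\|^2]$. Expanding the squared norm yields
\begin{equation}
\mathbb{E}\!\left[\|\mathbf{C} + \mathbf{B}\mathbf{A}\|^{2}\right] = \mathbb{E}[\mathbf{C}^{\top}\mathbf{C}] + 2\mathbf{A}^{\top}\mathbb{E}[\mathbf{B}^{\top}\mathbf{C}] + \mathbf{A}^{\top}\mathbb{E}[\mathbf{B}^{\top}\mathbf{B}]\,\mathbf{A},
\end{equation}
which is a quadratic form in $\mathbf{A}$ with Hessian $2\,\mathbb{E}[\mathbf{B}^{\top}\mathbf{B}]$.

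Then I would differentiate with respect to $\mathbf{A}$, obtaining the first-order condition $2\,\mathbb{E}[\mathbf{B}^{\top}\mathbf{B}]\,\mathbf{A} + 2\,\mathbb{E}[\mathbf{B}^{\top}\mathbf{C}] = 0$, and solve to get $\mathbf{A} = -\bigl(\mathbb{E}[\mathbf{B}^{\top}\mathbf{B}]\bigr)^{-1}\mathbb{E}[\mathbf{B}^{\top}\mathbf{C}]$. To conclude that this critical point is the global minimizer rather than a saddle, I would note that $\mathbb{E}[\mathbf{B}^{\top}\mathbf{B}]$ is positive semi-definite as the expectation of a Gram matrix, and I would assume it is in fact positive definite (which is the same condition required for the inverse to exist in the statement); under this assumption the objective is strictly convex and the critical point is the unique global minimizer.

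The main obstacle is less a mathematical difficulty than a bookkeeping one: stating what is being minimized, since $\hat{J}$ is vector-valued. Once one commits to the trace-of-covariance (equivalently, expected squared Euclidean norm) criterion and notes that unbiasedness from Property \ref{prop:unbiased} removes the dependence of $\|\mathbb{E}[\hat J]\|^2$ on $\mathbf{A}$, the proof collapses to a one-line least-squares computation. A minor side remark worth including is that this is the population-level optimum; the algorithm uses sample-based estimates of $\mathbb{E}[\mathbf{B}^{\top}\mathbf{B}]$ and $\mathbb{E}[\mathbf{B}^{\top}\mathbf{C}]$ (together with diagonal regularization to guarantee invertibility), so the practical choice of $\mathbf{A}$ is only an approximation of the minimizer identified here.
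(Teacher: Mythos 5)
Your proposal is correct and follows essentially the same route as the paper's proof: write $\hat J = \mathbf{C} + \mathbf{B}\mathbf{A}$, use Property \ref{prop:unbiased} to note that $\mathbb{E}[\hat J]$ (hence the squared-mean term in the variance) does not depend on $\mathbf{A}$, expand the remaining quadratic in $\mathbf{A}$, and set the derivative to zero. Your additions — making explicit that ``variance'' means the trace of the covariance of the vector-valued $\hat J$, and checking that positive (semi-)definiteness of $\mathbb{E}[\mathbf{B}^{\top}\mathbf{B}]$ makes the critical point the global minimizer — are small refinements the paper leaves implicit, not a different argument.
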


\begin{proof}
Let the sample estimate for the gradient be given by,
\begin{align}
    \hat J(\theta) \coloneqq \hat J(s, \alpha, a, \theta) &= \psi^\theta(s,\alpha, a)\left( q^\theta(s, a) +\lambda_1 \hat v(s) +\lambda_2  \bar q(s, \alpha)\right). \label{apx:eqn:unbiased-estimate}
\end{align}
We aim to find the values of $\lambda$ that minimizes the variance of this estimator, i.e.,
\begin{align}
    \lambda &= \underset{\lambda}{\text{argmin}} \left[ \text{var}(\hat J(\theta))\right].
\end{align}

The variance of the estimator can be computed as following,
\begin{align}
    \text{var}(\hat J(\theta)) &= \mathbb{E}\left[ \middle(\hat J(\theta) -  \mathbb{E}\middle[\hat J(\theta) \middle] \middle)^\top \middle(\hat J(\theta) -  \mathbb{E}\middle[\hat J(\theta) \middle] \middle) \right] \\
%     %
    &= \mathbb{E}\left[ \hat J(\theta)^\top \hat J(\theta) \right]  -  2\mathbb{E}\left[\hat J(\theta)^\top \mathbb{E}\middle[\hat J(\theta) \middle] \middle]  +  \mathbb{E}\middle[\hat J(\theta)\middle]^\top \mathbb{E}\middle[\hat J(\theta)\right] \\
%     %
    &= \mathbb{E}\left[ \hat J(\theta)^\top \hat J(\theta) \right] - \mathbb{E}\left[\hat J(\theta)\middle]^\top \mathbb{E}\middle[\hat J(\theta)\right]. \label{apx:eqn:var1}
\end{align}
From Property \ref{prop:unbiased} we know that,
\begin{align}
    \mathbb{E}\left[\hat J(\theta)\right] &= \mathbb{E}\left[\psi^\theta(s,\alpha, a)\left( q^\theta(s, a) +\lambda_1 \hat v(s) +\lambda_2  \bar q(s, \alpha)\right)\right]  \\
    &= \mathbb{E}\left[\psi^\theta(s,\alpha, a)q^\theta(s, a) \right] + 0
    \\
    &=  \mathbb{E}\big[\mathbf{C}\big].
\end{align}
Expanding \eqref{apx:eqn:var1} in the matrix notations, 
\begin{align}
  \text{var}(\hat J(\theta)) &= \mathbb{E}\left [\left(\mathbf{C} + \mathbf{BA}^\top \right)^\top 
    \left(\mathbf{C} + \mathbf{BA}^\top \right)\right] - \mathbb{E}\big[\mathbf{C}\big]^\top \mathbb{E}\big[\mathbf{C}\big]  \\
    &= \mathbb{E} \mathbf{\left[C^\top C \right] + \mathbb{E}\left[C^\top BA^\top \right] + \mathbb{E}\left[AB^\top C \right] + \mathbb{E}\left[AB^\top BA^\top \right] - \mathbb{E}\big[C\big]^\top \mathbb{E} \big[C \big]}.
    \label{apx:eqn:var2}
\end{align}
Since the first and last term from \eqref{apx:eqn:var2} are independent of $\mathbf{A}$, it does not effect the optimization step. 
The remaining terms that matter are,
\begin{align}
    \mathbb{E} \mathbf{\left[C^\top BA^\top \right] + \mathbb{E}\left[AB^\top C \right] + \mathbb{E}\left[AB^\top BA^\top \right]}.
\end{align}
Differentiating these terms with respect to $\mathbf{A}$, and by equating it to $0$, we get,
\begin{align}
    2\mathbb{E}\mathbf{\left[B^\top C \right]} + 2\mathbb{E}\mathbf{\left[AB^\top B \right]} &= 0 
    \\
 2\mathbb{E}\mathbf{\left[AB^\top B \right]} &= - 2\mathbb{E}\mathbf{\left[B^\top C \right]} \\
 \mathbf{A}\mathbb{E}\mathbf{\left[B^\top B \right]} &= - \mathbb{E}\mathbf{\left[B^\top C \right]} \\
 \mathbf{ A}&= - \left(\mathbb{E}\mathbf{\left[B^\top B \right]  }\right)^{-1} \mathbb{E}\mathbf{\left[B^\top C \right]}
\end{align}
\end{proof}

\section{E: SAS Natural Policy Gradient}
Pseudo-code for SAS natural policy gradient is provided in Algorithm \ref{apx:Alg:2}.
Let the learning-rate for updating $\theta$ and $w$ be given by $\eta_\theta$ and $\eta_w$, respectively.
Similar to Algorithm 1, we first collect the transition batch $\mathbb{B}$ and compute the sampled returns from each state in Lines $2$ and $3$.
Following Lemma \ref{lemma:sasnpg}, we update the parameter $w$ in Line $5$ to minimize its associated TD error. 
The updated parameter $w$ is then used to update the policy parameters $\theta$.
As dividing by a scalar does not change the direction of the (natural) gradient,  we normalize the update using norm of $w$ in Line $6$ for better stability.

	\IncMargin{1em}
	\begin{algorithm2e}[h]
		% \SetAlgoLined
		%\textbf{Input:} {self\_supervise frequency $k$} \\
% 		$\mathbf{A} = \lbrack\lambda_1,\lambda_2 \rbrack^\top = \lbrack -0.5, -0.5 \rbrack^\top$ 
% 		\Comment{Initialize $\alpha$'s}\\
		\For {$episode = 0,1,2...$}{
			%\If{episode $\%$ $k==0$}{
			%	\textit{Self\_supervise($\Omega$)}
			%}
			\nonl \textcolor[rgb]{0.5,0.5,0.5}{\# Collect transition batch  using $\pi^\theta$}
			\\
    		$\mathds{B} = \{(s_0, \alpha_0, a_0, r_0), ..., (s_T, \alpha_T, a_T, r_T)  \}$  
    		\\
    		$\hat G(s_t) = \sum_{k=0}^{T -t} \gamma^k r_{t+k}  $
    		\\
    		\vspace{8pt}
    		\nonl \textcolor[rgb]{0.5,0.5,0.5}{\# Perform batch update on parameters}
    		\\
    		$\psi^\theta (s, \alpha, a) = \frac{ \partial \log \pi^\theta(s, \alpha, a)}{\partial \theta}$
    		\\
    		$w \leftarrow w + \eta_w (\hat G(s) - \psi^\theta (s, \alpha, a)^\top w)\psi^\theta (s, \alpha, a)$ \Comment{ Update $w$}
    		\\
    		$\theta \leftarrow \theta + \eta_\theta \frac{w}{\lVert w \rVert }$ \Comment{ Update $\pi^\theta$}
		}  
		\caption{Stochastic Action Set Natural Policy Gradient (SAS-NPG)}
		\label{apx:Alg:2}  
	\end{algorithm2e}
	\DecMargin{1em}   

\section{F: Empirical Analysis Details}
\label{apx:empirical}
\begin{figure}[t]
		\centering
		\includegraphics[width=0.25\textwidth]{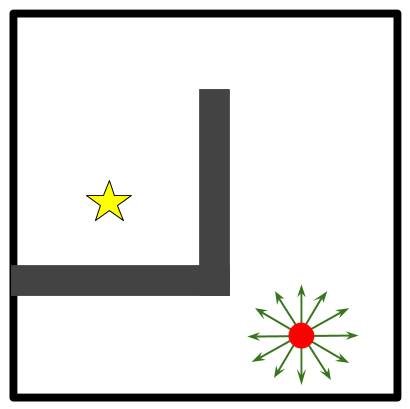} \hspace{60pt}
		\includegraphics[width=0.25\textwidth]{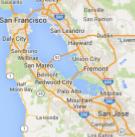}
		\caption{(Left) An illustrations of the top view of the maze domain, where the red dot corresponds to the agent and the green arrows around it represent the actions. The star represents the goal position. 
		(Right) Map view of San Francisco bay area.
		We consider the road network similar to the one used by \citet{Boutilier2018PlanningAL}. 
% 		\todo{Placeholder image for the San Francisco map.}
        }
		\label{Fig:domains}
\end{figure}

\subsection{Implementation details}
\label{apx:imp}
\paragraph{Policy parmaterization.} To make the policy handle stochastic action sets, we make use of a mask which indicates the available actions.
Formally, let $\phi(s) \in \mathbb{R}^d$ be the feature vector of the state and let $\theta \in \mathbb{R}^{d \times |\mathcal B|}$ denote the parameters that project the features on the space of all actions.
Let $y \coloneqq  \phi(s)^\top \theta$ denote the scores for each action and let $\mathds{1}_{\{a \in \alpha\}}$ be the indicator variable denoting whether the action $a$ is in the available action set $\alpha$ or not.
The probability of choosing an action is then computed using the masked softmax, i.e.,
\begin{align}
    \pi^\theta(s, \alpha, a) &\coloneqq \frac{\exp(y_a) \cdot \mathds{1}_{\{a \in \alpha\}}}{ \underset{a' \in \alpha}{\sum} \exp (y_{a'}) \cdot \mathds{1}_{\{a' \in \alpha\}}},
\end{align}
where $y_a$ corresponds to the score of action $a$ in $y$.

\paragraph{Hyperparamter settings.}
For the maze domain, state features were represented using $3^\text{rd}$ order coupled Fourier basis \citep{konidaris2011value}.
For the San Francisco map domain, one-hot encoding was used to represent each of the nodes (states) in the road-network.
For the recommender system domain, the user-context provided by the environment was directly used as state-features. 
Using these features, single layer-neural networks were used to represent the policy, baselines and the q-function for all the algorithms, for all the domains.
The discounting parameter $\gamma$ was set to $0.99$ for all the domains. 

For SAS policy gradient, the learning rates for both the baselines were searched over $[1e-2, 1e-4]$.
The learning rate for policy was searched over $[5e-3, 5e-5]$.
The hyper-parameter $\eta_\lambda$ was kept fixed to $0.999$ throughout. 
For SAS natural policy gradient, the learning rate, $\eta_w$, was searched over $[1e-2, 1e-4]$.

For SAS-Q-learning baseline, the exploration parameter for $\epsilon$-greedy was searched over $[0.05, 0.15]$ and the 
Learning rate for the q-function was searched over $[1e-2, 1e-4]$.
To encompass both online and batch learning for SAS-Q-learning, additional hyperparameter search was done over the batch-sizes $\{1, 8, 16\}$ and the number of batches $\{1, 8, 16\}$ per update to the q-function.
Note that when both the batch size and the number of batches is $1$, it becomes the online version \cite{Boutilier2018PlanningAL}. 
In total, $1000$ settings for each algorithm, for each domain, were uniformly sampled from the mentioned hyper-parameter ranges/sets.
Results from the best performing setting is reported in all the plots.
Each hyper-parameter setting was ran using $30$ different seeds to get the standard deviation of the performance.

\subsection{Additional Experimental Results}
In Figures  \ref{apx:Fig:all_perf} and  \ref{apx:Fig:all_prob} we report the learning curves and the adapted $\lambda_1$ and $\lambda_2$ values for all the domains under different probability values of action availability.
\label{apx:results}
\begin{figure}[t]
		\centering
		\includegraphics[width=0.32\textwidth]{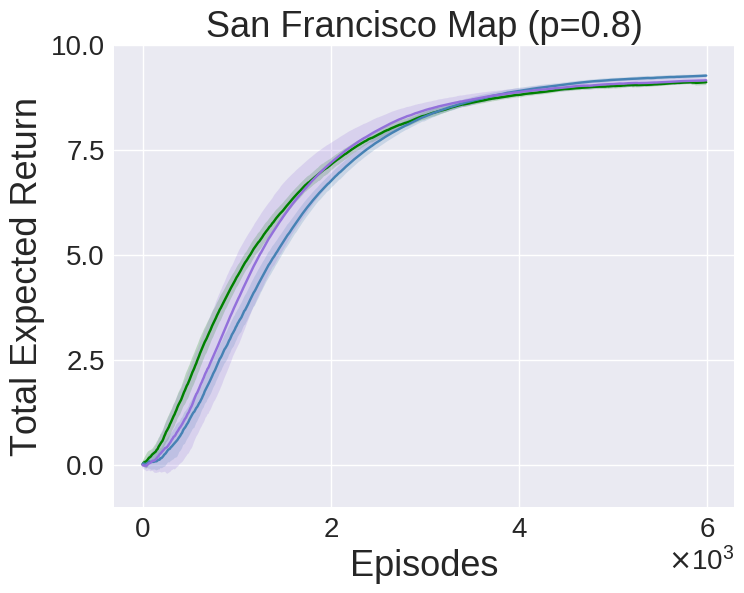}
		\includegraphics[width=0.32\textwidth]{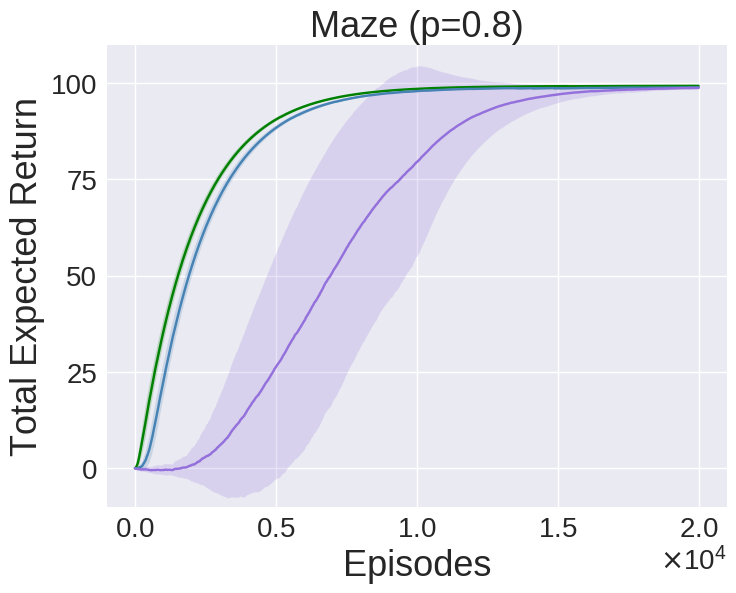}
		\includegraphics[width=0.32\textwidth]{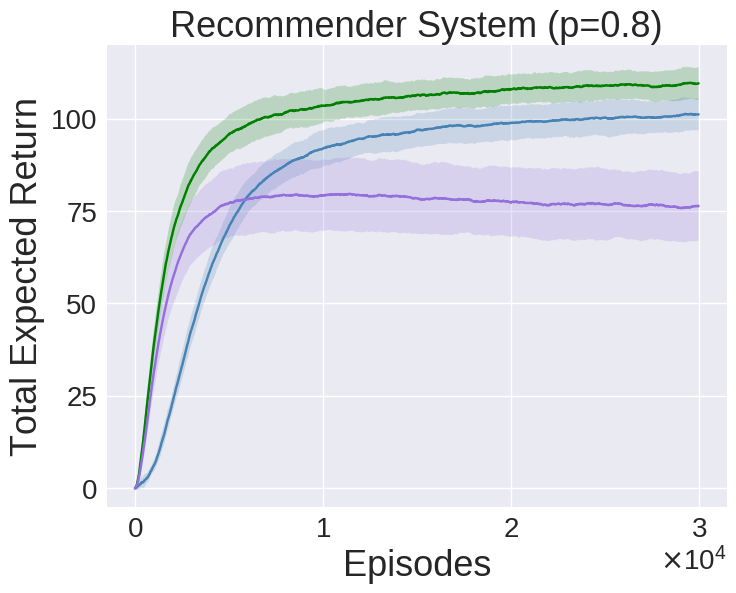}
		\\
		\includegraphics[width=0.32\textwidth]{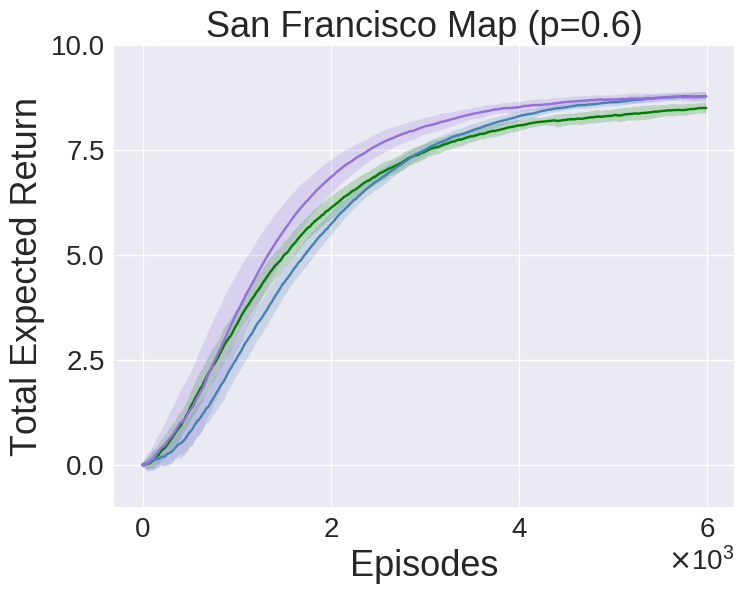}
		\includegraphics[width=0.32\textwidth]{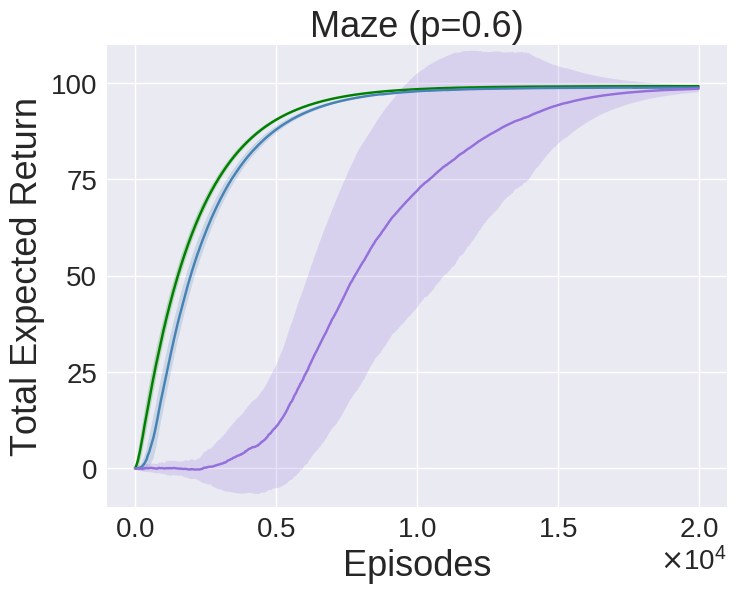}
		\includegraphics[width=0.32\textwidth]{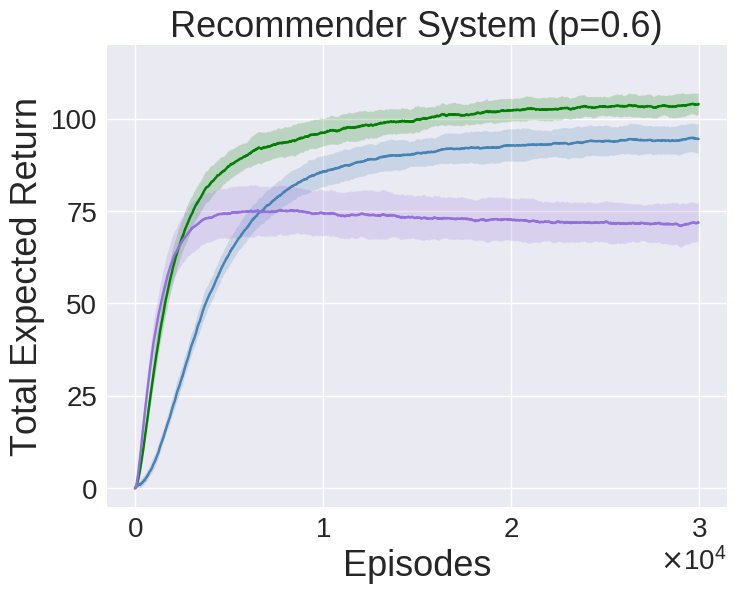}
		\\
		\includegraphics[width=0.32\textwidth]{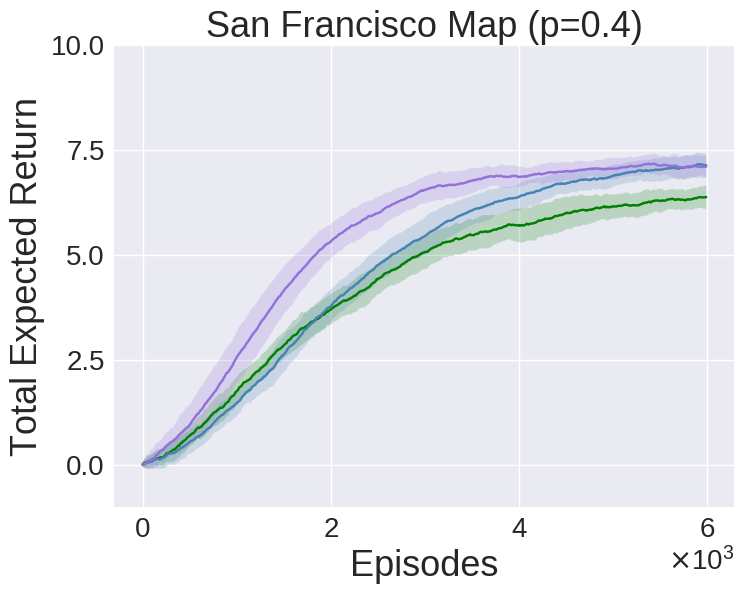}
		\includegraphics[width=0.32\textwidth]{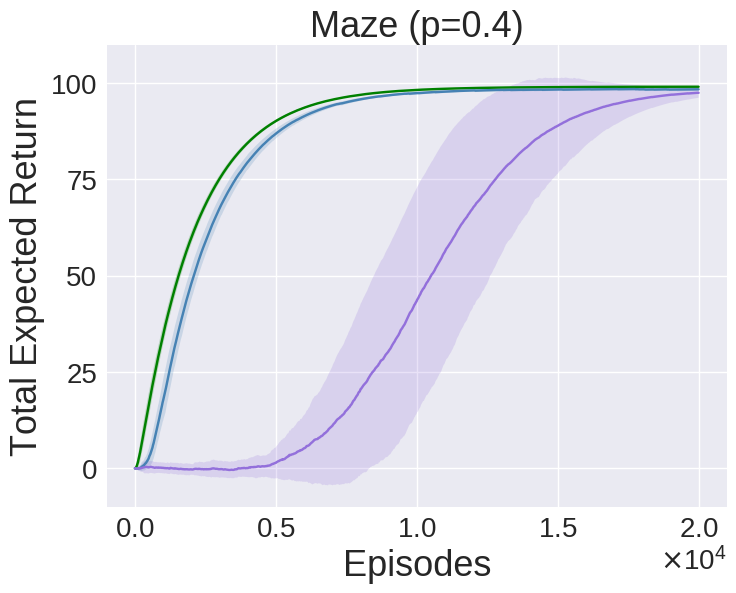}
		\includegraphics[width=0.32\textwidth]{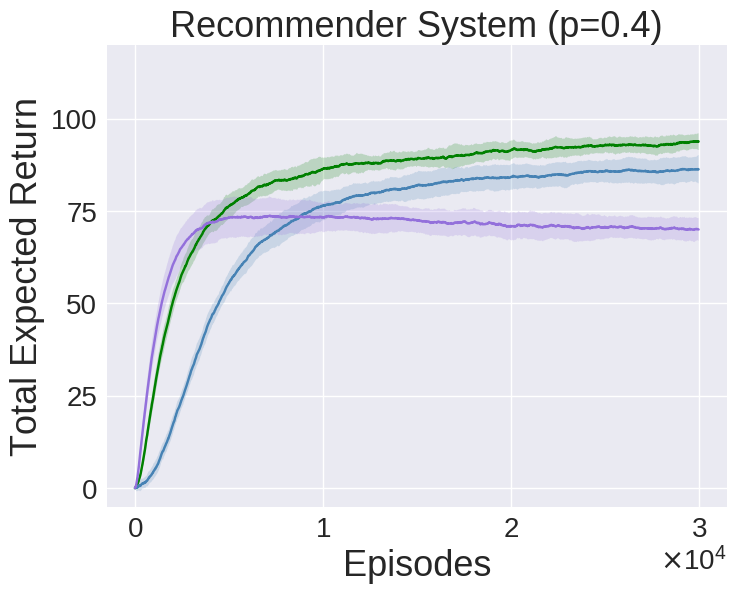}
		\\
		\includegraphics[width=0.32\textwidth]{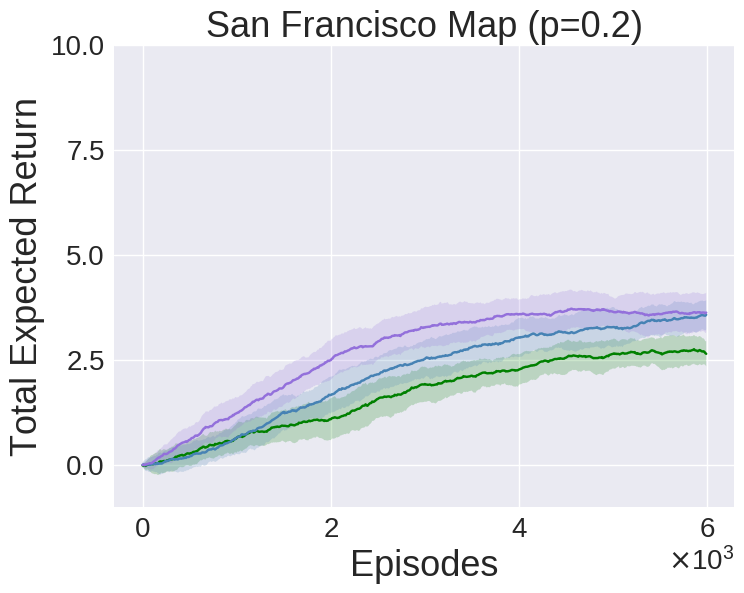}
		\includegraphics[width=0.32\textwidth]{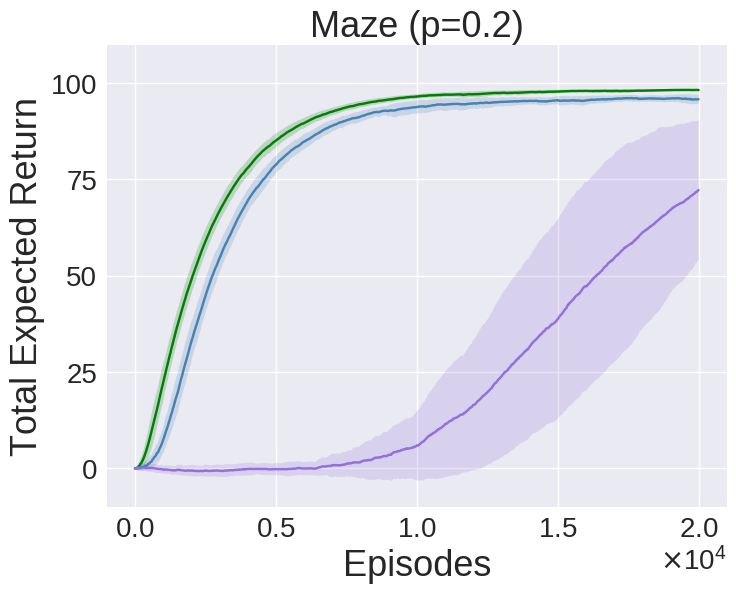}
		\includegraphics[width=0.32\textwidth]{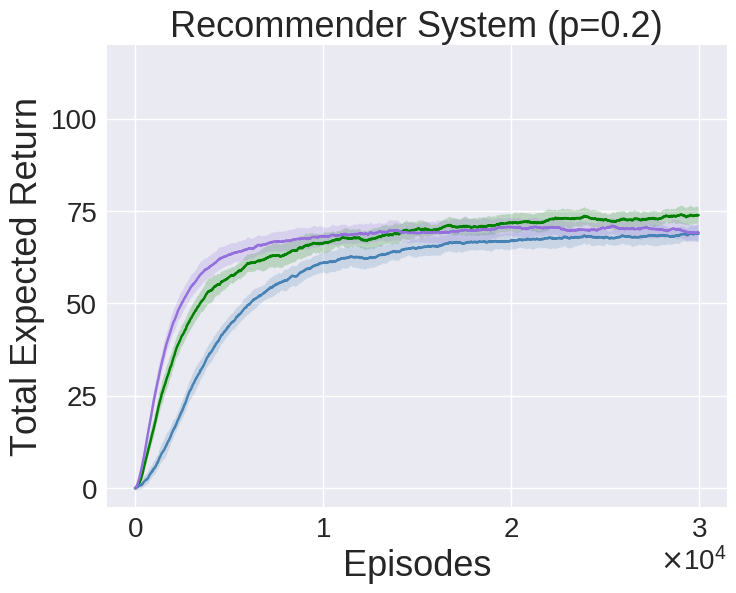}
		\\
		\includegraphics[width=0.6\textwidth]{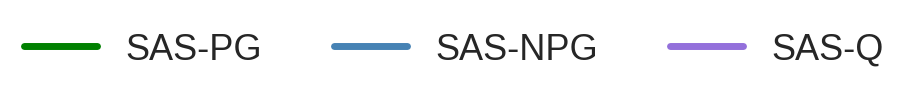}
		\caption{Best performing learning curves on different settings. (Left to right) San Francisco Map domain, Maze domain, and the Recommender System domain. 
		(Top to bottom) Probability of any action being available in the action set, ranging from $0.8$ to $0.2$, for the respective domains. 
		Shaded regions correspond to one standard deviation and were obtained using $30$ trials.}
		\label{apx:Fig:all_perf}
\end{figure}
\clearpage

\begin{figure}[t]
		\centering
		\includegraphics[width=0.32\textwidth]{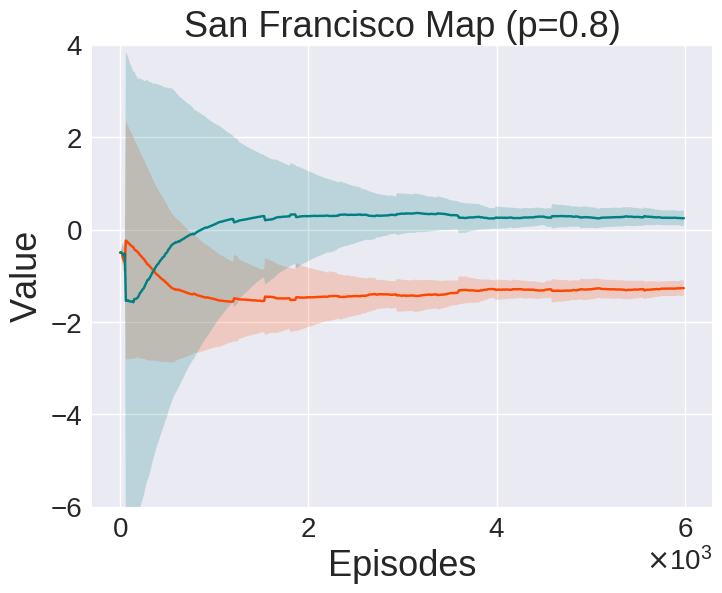}
		\includegraphics[width=0.32\textwidth]{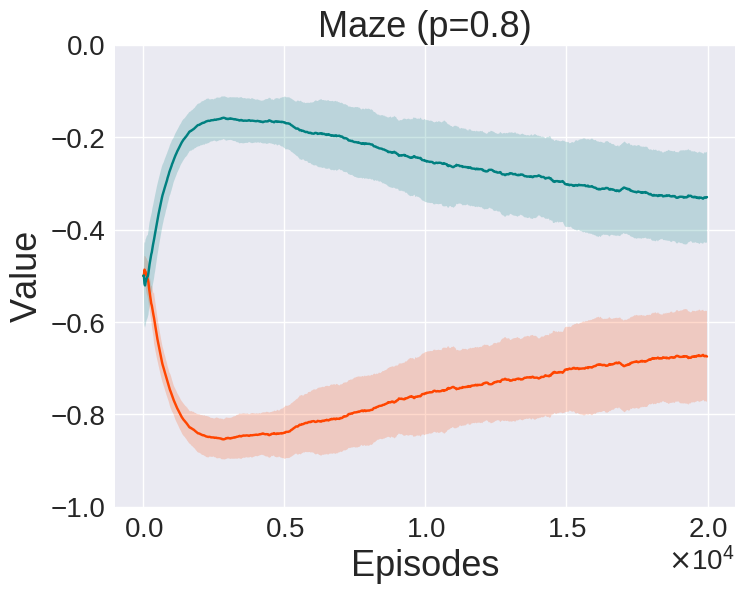}
		\includegraphics[width=0.32\textwidth]{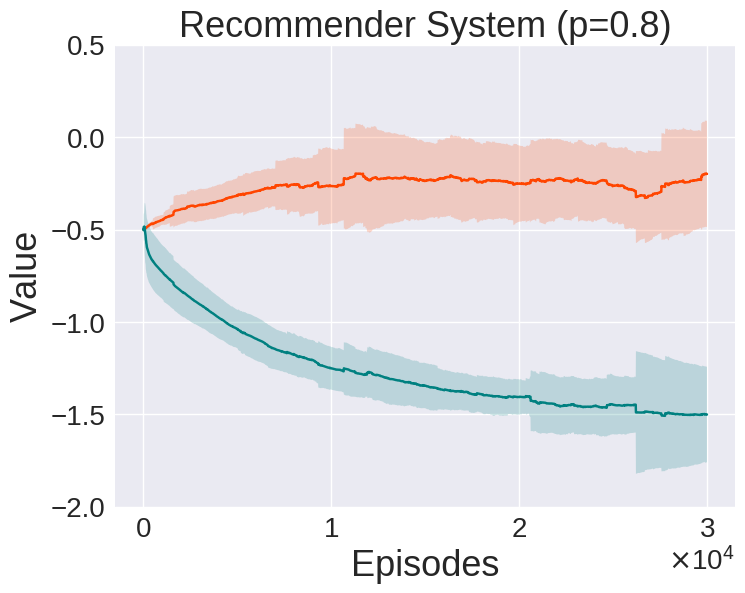}
		\\
		\includegraphics[width=0.32\textwidth]{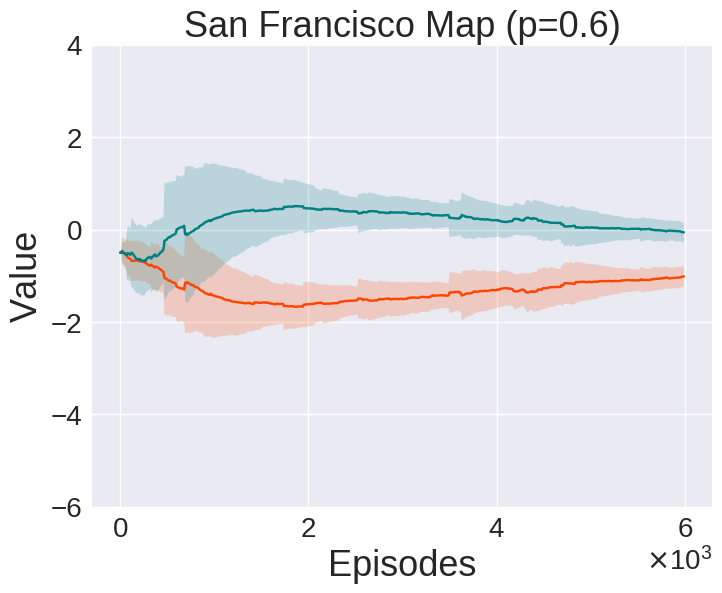}
		\includegraphics[width=0.32\textwidth]{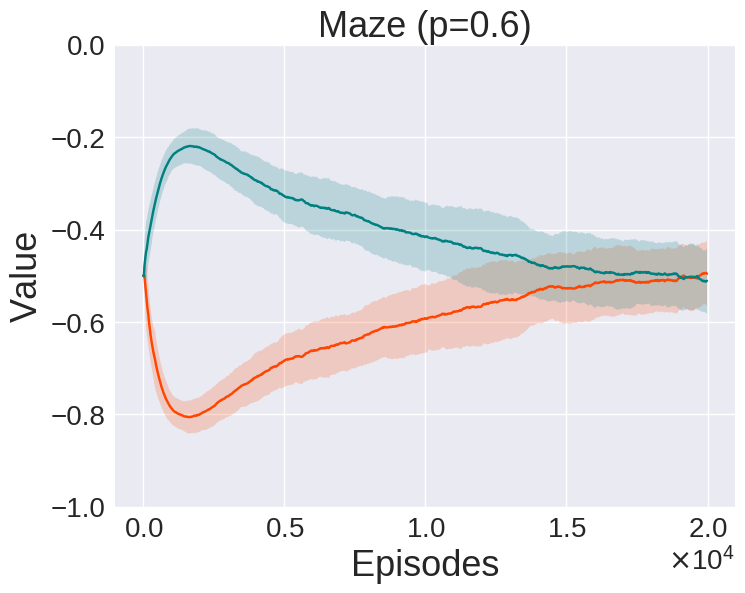}
		\includegraphics[width=0.32\textwidth]{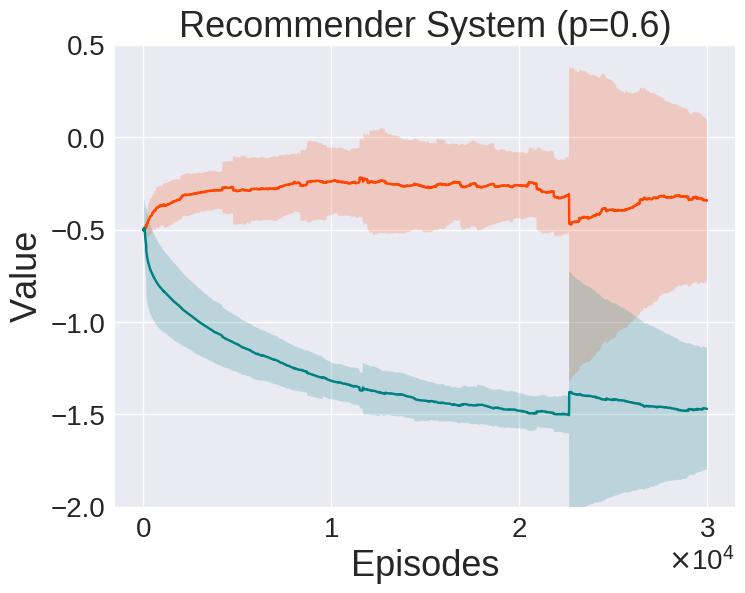}
		\\
		\includegraphics[width=0.32\textwidth]{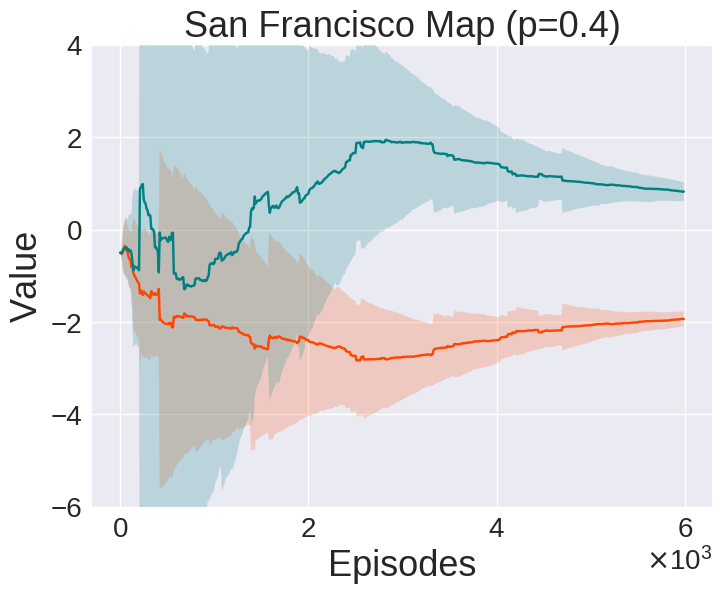}
		\includegraphics[width=0.32\textwidth]{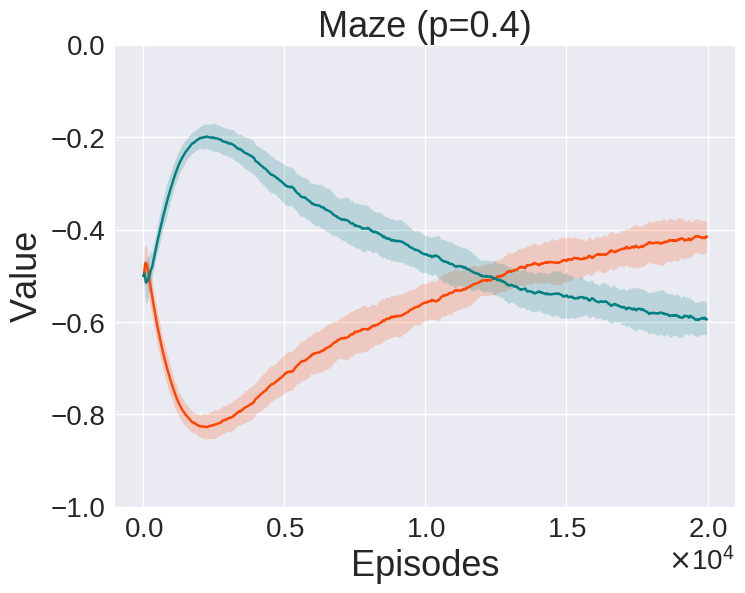}
		\includegraphics[width=0.32\textwidth]{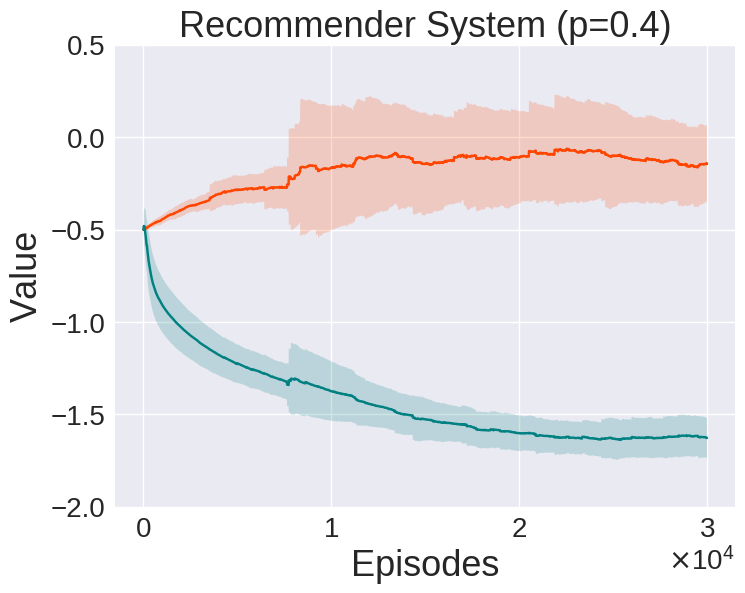}
		\\
		\includegraphics[width=0.32\textwidth]{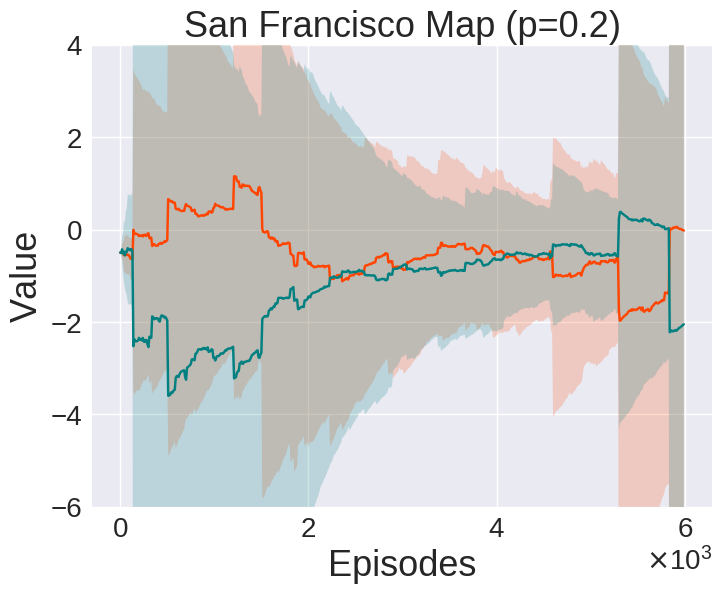}
		\includegraphics[width=0.32\textwidth]{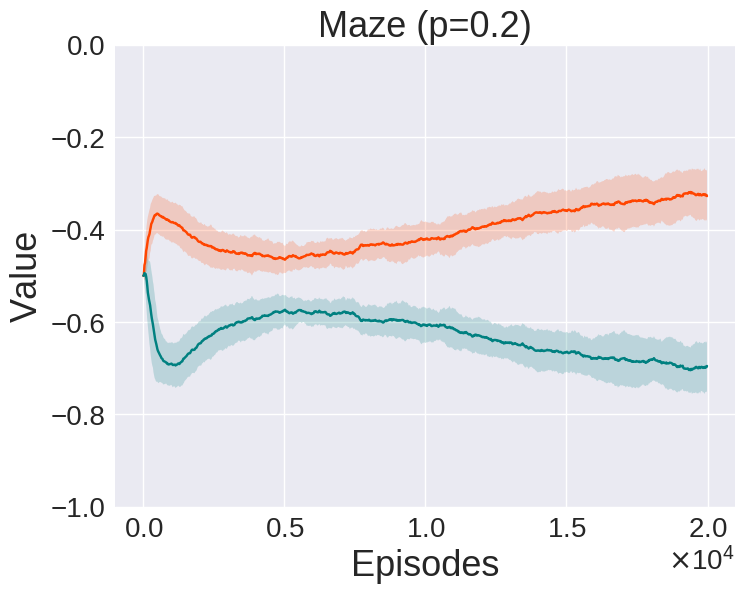}
		\includegraphics[width=0.32\textwidth]{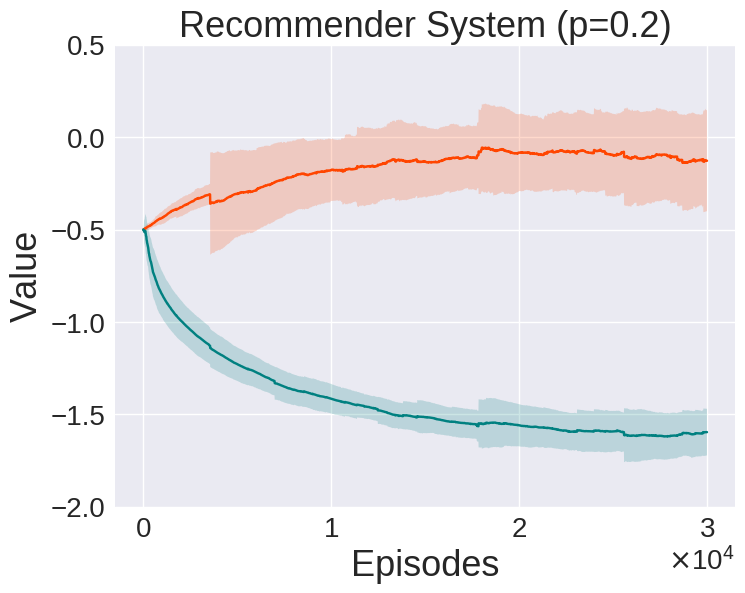}
		\\
		\includegraphics[width=0.6\textwidth]{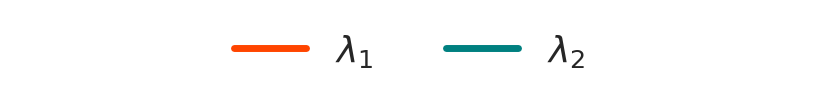}
		\caption{
		Autonomously adapted values of $\lambda_1$ and $\lambda_2$ (associated with $\hat v$ and $\bar q$, respectively) for the best performing SAS-PG instance on different settings. 
		(Left to right) San Francisco Map domain, Maze domain, and the Recommender System domain. 
		(Top to bottom) Probability of any action being available in the action set, ranging from $0.8$ to $0.2$, for the respective domains.
		Shaded regions correspond to one standard deviation and were obtained using $30$ trials.}
		\label{apx:Fig:all_prob}
\end{figure}
\clearpage

\end{document}